\newtheorem{thm}{Theorem}
\newtheorem{lem}{Lemma}
\newtheorem{rem}{Remark}
\newtheorem{assump}{Assumption}
\newtheorem{problem}{Problem}
\newcommand{\Rmnum}[1]{\expandafter\@slowromancap\romannumeral #1@}
\newcommand*\bigcdot{\mathpalette\bigcdot@{.5}}
\newcommand*\bigcdot@[2]{\mathbin{\vcenter{\hbox{\scalebox{#2}{$\m@th#1\bullet$}}}}}
\DeclarePairedDelimiter\floor{\lfloor}{\rfloor}
\def\@#1{\pmb{#1}}
\def\b#1{\mathbb{#1}}
\def\s#1{\mathsf{#1}}
\def\ca#1{\mathcal{#1}}
\def\v#1{\boldsymbol{#1}}
\newcommand\aleq{\mathrel{\stackrel{\makebox[0pt]{\mbox{\normalfont\tiny (a)}}}{\leq}}}
\newcommand\bleq{\mathrel{\stackrel{\makebox[0pt]{\mbox{\normalfont\tiny (b)}}}{\leq}}}
\newcommand\aeq{\mathrel{\stackrel{\makebox[0pt]{\mbox{\normalfont\tiny (a)}}}{=}}}
\DeclareMathOperator*{\argmin}{arg\,min}
\begin{document}
\title{{Federated Learning With Energy Harvesting Devices: An MDP Framework}}
\author{Kai Zhang$^{\dagger}$, Xuanyu Cao$^{\ddagger}$, \emph{Senior Member}, \emph{IEEE}, and Khaled B. Letaief$^{\dagger}$, \emph{Fellow}, \emph{IEEE}\\
	$^{\dagger}$Department of Electronic and Computer Engineering, The Hong Kong University of Science and Technology
	\\$^{\ddagger}$School of Electrical Engineering and Computer Science, Washington State University
	\\Email: kzhangbn@connect.ust.hk, xuanyu.cao@wsu.edu, eekhaled@ust.hk
}

\maketitle

\begin{abstract}

Federated learning (FL) necessitates that edge devices conduct local training and communicate with a parameter server, resulting in significant energy consumption. A key challenge in practical FL systems is the rapid depletion of battery-limited edge devices, which limits their operational lifespan and impacts learning performance. To tackle this issue, we implement energy harvesting techniques in FL systems to capture ambient energy, thereby providing continuous power to edge devices.
We first establish the convergence bound for the wireless FL system with energy harvesting devices, illustrating that the convergence is affected by partial device participation and packet drops, both of which depend on the energy supply. To accelerate the convergence, we formulate a joint device scheduling and power control problem and model it as a Markov decision process (MDP). By solving this MDP, we derive the optimal transmission policy and demonstrate that it possesses a monotone structure with respect to the battery and channel states. To overcome the curse of dimensionality caused by the exponential complexity of computing the optimal policy, we propose a low-complexity algorithm, which is asymptotically optimal as the number of devices increases. Furthermore, for unknown channels and harvested energy statistics, we develop a structure-enhanced deep reinforcement learning algorithm that leverages the monotone structure of the optimal policy to improve the training performance. Finally, extensive numerical experiments on real-world datasets are presented to validate the theoretical results and corroborate the effectiveness of the proposed algorithms.

\end{abstract}

\begin{IEEEkeywords}

Federated learning, energy harvesting, Markov decision process, monotone structure, curse of dimensionality, deep reinforcement learning.

\end{IEEEkeywords}

	\section{Introduction}

Machine learning is a cornerstone technology driving artificial intelligence (AI) in diverse fields, such as autonomous driving \cite{kiran2021deep} and the Internet of Things \cite{khan2021federated }.
Traditional centralized machine learning requires the aggregation and processing of distributed data on a central server, which leads to increased data traffic and raises concerns about data privacy and security.
To address these issues, federated learning (FL) has emerged as a novel distributed machine learning paradigm that enables collaborative model training without uploading raw data to a central server\cite{letaief2019roadmap,letaief2021edge}.
In FL, edge devices (e.g., mobile phones and sensors) conduct local training on their respective datasets and subsequently exchange information (e.g., model parameters or gradients) with a parameter server.
Due to its distinctive architecture, FL can alleviate the computation and communication burdens and preserve the data privacy of involved edge devices \cite{yang2020federated,pillutla2022robust}.

In practical scenarios, the performance of FL is severely affected by radio resources, such as transmission power and bandwidth.
The limited radio resources among participating edge devices hinder data transmission and increase communication latency, thereby degrading the learning performance.
To tackle these issues, numerous papers have investigated communication resource management for wireless FL systems.
In \cite{9210812}, the authors investigated a joint radio resource allocation and user selection problem for wireless FL, aiming to minimize the training loss under delay and energy consumption constraints. A similar resource allocation problem of achieving the optimal tradeoff between convergence speed and power consumption for wireless FL was studied in \cite{9261995}. In \cite{sery2021over}, the authors developed a joint computation and transmission scheme for wireless FL under a more challenging setting with heterogeneous data. 
While the aforementioned studies explored static resource allocation according to the statistical properties of wireless channels, other research focused on the dynamic allocation of communication resources.
The authors of \cite{cao2021optimized} proposed a dynamic power control policy based on instant channel state information (CSI) to speed up the convergence of over-the-air FL. Similarly, a dynamic device scheduling and power control scheme for over-the-air FL was developed in \cite{sun2021dynamic}.
Bandwidth allocation for wireless FL has also been studied extensively.
In \cite{xu2020client}, the authors investigated a joint device selection and bandwidth allocation problem for wireless FL, devising an online algorithm to maximize the number of participating devices in each round. A similar problem incorporating latency constraints was studied in \cite{shi2020joint}. Additionally, the authors of \cite{wan2021convergence} studied a joint power and bandwidth allocation problem to simultaneously minimize the power consumption, computation cost, and time cost of wireless FL systems.

Despite the extensive research, all the aforementioned studies have either considered finite energy supply for edge devices or completely neglected the energy limitations.
However, merely utilizing the finite energy stored in the batteries of edge devices will lead to a reduced operational lifespan. Once the battery is depleted, edge devices cannot contribute to the training process of FL, which degrades the learning performance.
Additionally, frequent battery replacements and recharges are impractical, especially for devices deployed in remote or inaccessible environments.
To address this issue, energy harvesting (EH) FL has emerged as a promising solution. This technique collects ambient energy (e.g., solar, wind or thermoelectric energy) to continuously power edge devices, thereby extending their operational lifetime and improving the learning performance.
Furthermore, EH FL uses clean and renewable energy sources, making it more environmentally friendly compared to conventional FL with battery-powered edge devices.
Several recent studies have investigated EH FL.
In \cite{guler2021energy}, a distributed stochastic gradient descent (SGD) algorithm was developed under intermittent and heterogeneous energy arrivals, with provable convergence guarantees. The subsequent work \cite{guler2021framework} extended this algorithm by proposing a scalable client scheduling and model update scheme based on each client’s energy renewal cycle, achieving notable improvements over energy-agnostic baselines.
In \cite{shen_federated_2022}, a client scheduling algorithm was introduced to accelerate the convergence of FL over EH-enabled wireless networks.
A joint device selection and resource allocation problem was investigated in \cite{zeng_federated_2023} to reduce training latency and improve the data utility of EH FL. The authors of \cite{hamdi_federated_2022} extended this problem to a more sophisticated system, where the server employs massive multiple-input multiple-output (MIMO) to serve clients.
Exploiting the over-the-air computation technique, bandwidth-efficient transmission schemes tailored for EH FL were proposed in \cite{chen_energy_2022,aygun2022over}.
In \cite{liu2021age}, a joint client selection and bandwidth allocation scheme was designed for FL with EH devices, aiming to minimize the ages of local models and the latency of global iterations.
To accelerate the convergence of EH FL, the authors of \cite{an_online_2023} formulated a joint transceiver design and device scheduling problem and developed a Lyapunov-based online optimization algorithm to solve it.

The prior works \cite{guler2021energy, guler2021framework, shen_federated_2022, zeng_federated_2023, hamdi_federated_2022, chen_energy_2022, aygun2022over,liu2021age} on EH FL concentrate on designing transmission strategies for one typical time slot.
However, the overall performance of FL is affected by the cumulative model aggregation errors over the entire training period, necessitating careful utilization of harvested energy from a long-term perspective.
We note that \cite{an_online_2023} is an early attempt to optimize the long-term efficiency of EH FL.
However, its online optimization algorithm fails to exploit the statistical knowledge of system dynamics, e.g., the distributions of the channel gains and the amount of harvested energy, resulting in suboptimal learning performance.
Thus, in this paper, we are motivated to develop a Markov decision process (MDP) framework for designing the optimal transmission policy of EH FL.
The proposed MDP framework utilizes the statistical knowledge of channels and harvested energy to address their inherent stochastic nature, thereby enabling efficient utilization of harvested energy and enhanced learning performance.
The main contributions are summarized as follows.

1) We characterize the convergence bound of the wireless FL system with EH devices, demonstrating the impacts of partial device participation and packet drops, both of which further depend on the energy supply.
Based on the convergence bound, we formulate a joint device scheduling and power control problem to minimize the optimality gap between the expected and optimal global loss.

2) We propose an MDP framework for the formulated optimality gap minimization problem and compute the optimal transmission policy.
We prove that the optimal policy exhibits a monotone structure with respect to the battery and channel states.
Moreover, to overcome the curse of dimensionality caused by the exponential complexity of computing the optimal policy, we propose a low-complexity algorithm that is asymptotically optimal as the number of devices increases.

3) For a more general scenario with unknown channels and harvested energy statistics, we develop a structure-enhanced deep reinforcement learning (SE-DRL) algorithm that leverages the monotone structure of the optimal policy to improve the training performance.

4) We validate the theoretical results and evaluate the performance of the proposed algorithms on real-world datasets. Numerical experiments demonstrate that the proposed algorithms outperform benchmark schemes in terms of test accuracy. The impact of battery capacity is also investigated.

The rest of this paper is organized as follows. In Section II, we elaborate on the system model. In Section III, we conduct the convergence analysis and formulate the design problem accordingly. In Section IV, we develop an MDP framework and compute its optimal policy. In Section V, we propose an asymptotically optimal policy with low complexity. In Section VI, we develop an SE-DRL algorithm. Simulation results are presented in Section VII, and we conclude the paper in Section VIII.

\emph{Notations:} Bold lowercase letters stand for column vectors. The symbol $\b{R}$ denotes the set of real numbers.
$[n]$ denotes the set $\left\lbrace 1,2,\ldots,n\right\rbrace $.
$\b{E}[\cdot]$ denotes the expectation operation. $\nabla$ represents the gradient operator. $|\cdot|$ and $\|\cdot\|$ stand for the $\ell_1$ and $\ell_2$ norms of vectors. $\mathbbm{1}_{\left\lbrace \cdot \right\rbrace }$ denotes the indicator function. $\b{P}(\cdot|\cdot)$ denotes the condition probability. For a set $\ca{X}$, $\ca{X}^n$ stands for the $n$-fold Cartesian product. The floor function, denoted by $\floor{x}$, maps a real number $x$ to the largest integer less than or equal to $x$.

\section{System Model}

\subsection{Federated Learning Model}

\begin{figure}[tbp]
	\renewcommand\figurename{\small Fig.}
	\centering \vspace*{4pt} \setlength{\baselineskip}{10pt}
	\includegraphics[width = 0.49\textwidth,center,trim=16 40 16 70,clip]{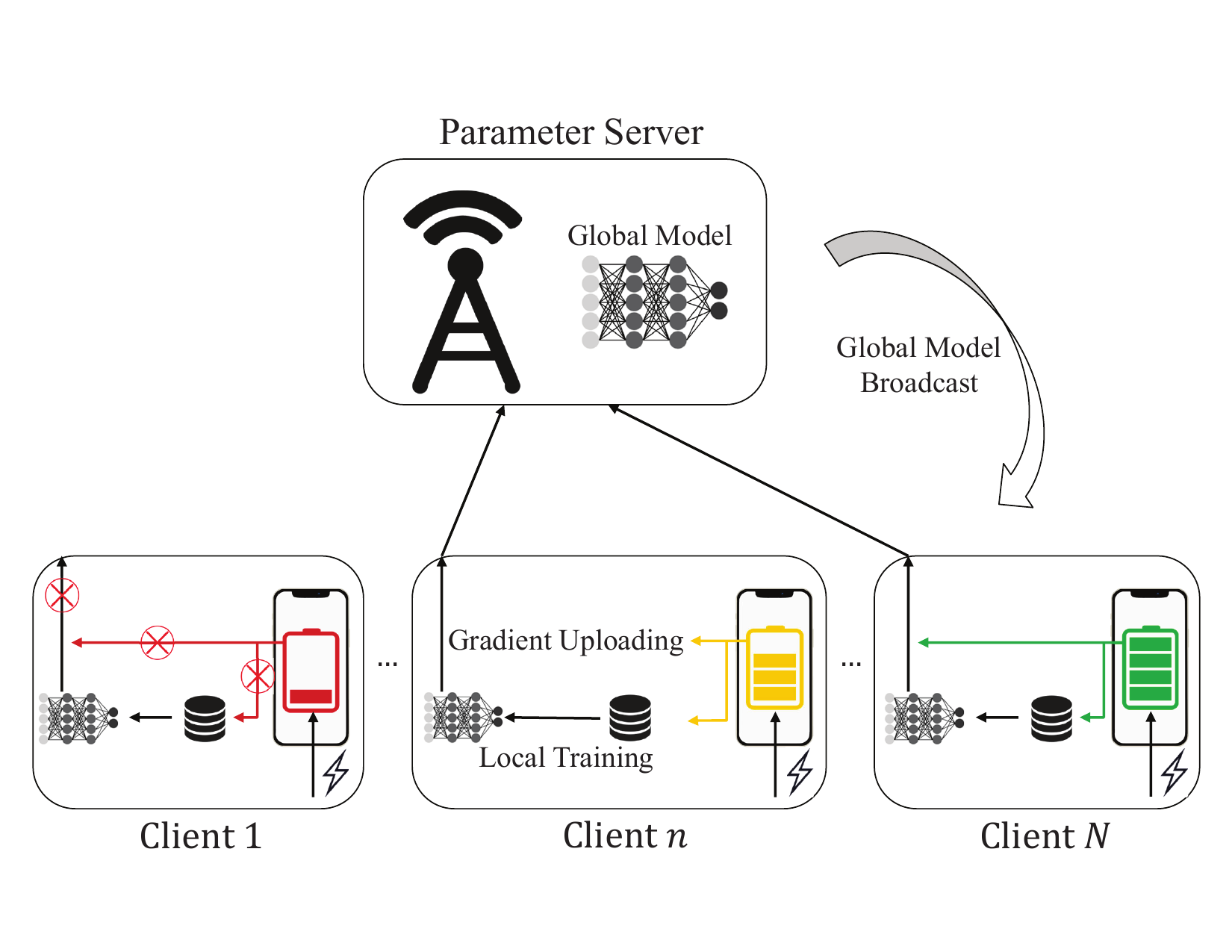}
	\vspace{-15pt}
	\caption{An illustration of the wireless FL system with EH devices.}\label{system_model_fig}
\end{figure}

We consider a wireless FL system, where $N$ edge devices collaboratively train a common learning model $\v{w}=[w_1,\ldots,w_d]\in \b{R}^{d}$ under the coordination of a parameter server, as shown in Fig. 1.
Each device $n$ is associated with a local loss function $F_n(\v{w})=  \b{E}_{\xi_n\sim \ca{D}_{n}} \left[  f_n(\v{w}; \xi_n )\right] $, where $f_n(\v{w}; \xi_n)$ denotes the sample-wise loss function that quantifies the loss of learning model $\v{w}$ on the data sample $\xi_n$, and $\ca{D}_n$ is the local dataset of device $n$.
%
%
The goal of the FL system is to train an optimal global model $\v{w}^{*}$ that minimizes the global loss function $F(\v{w})$, i.e.,
\begin{equation}\label{global_fl_problem}
	\begin{aligned}
		\min_{\v{w}} \; F(\v{w})=\frac{1}{N}\sum_{n=1}^{N} F_n(\v{w}).
	\end{aligned}    
\end{equation}
Problem \eqref{global_fl_problem} is known as empirical risk minimization, which arises in many ML problems. To solve the learning problem \eqref{global_fl_problem} in a distributed manner, the FL system conducts the following three steps iteratively under the coordination of the server:
\begin{itemize}
	\item  The server schedules a subset of edge devices and broadcasts the current global model to them.
	\item  Each scheduled device performs multiple steps of local SGD on its private dataset, and then uploads its local model update to the server.
	\item  The server aggregates the local updates received from the participating devices and then refines the global model.
\end{itemize}
Each iteration consisting of these three steps is called a time slot, indexed by $t$. The whole training process continues until reaching a preset maximum number of time slots $T$.

\subsection{Local Computation Model}
At time slot $t$, the server first schedules a subset of edge devices $\Omega_t \subseteq [N]$ and then broadcasts the current global model $\v{w}_{t}$ to these selected devices.
We assume that the server can broadcast the global model to edge devices without any error, due to its strong communication capability (e.g., large transmission power). 
Let $\beta_{n,t}\in\left\lbrace 0,1\right\rbrace $ denote the scheduling indicator, with $\beta_{n,t}=1$ if device $n$ is scheduled at time slot $t$, and $\beta_{n,t}=0$ otherwise.
Then, we have $\Omega_t=\left\lbrace n \,\vert \, \beta_{n,t}=1, n \in [N] \right\rbrace $.
After receiving the global model $\v{w}_{t}$, each scheduled device $n \in \Omega_t$ initialize the model for local SGD by setting \(\v{w}_{n,t}^{(1)} = \v{w}_{t}\). Then, for each local iteration \( k \in \{1,2, \ldots, K\} \), device $n$ samples a mini-batch $\vspace{0pt}$\( \hspace{2pt} \v{\xi}_{n,t}^{(k )} \hspace{0.5pt} \subseteq \hspace{0.5pt} \mathcal{D}_i \) from its local dataset and computes the stochastic gradient as follows:
\begin{equation}\label{eq:local_gradient}
	\v{g}_{n,t}^{(k)} = \nabla f_n(\v{w}_{n,t}^{(k)}; \v{\xi}_{n,t}^{(k)}).
\end{equation}
The local model is then updated as follows:
\begin{equation}\label{eq:local_update}
	\v{w}_{n,t}^{(k+1)} = \v{w}_{n,t}^{(k)} - \eta \, \v{g}_{n,t}^{(k)},
\end{equation}
where \( \eta > 0 \) denotes the learning rate.
The local computation energy consumption for device $n$ at time slot $t$ can be expressed as \cite{yang2020energy}
\begin{equation}\label{eq:energy_consumption}
	e_{n,t}^{\text{cmp}} = \beta_{n,t} K  I_n^2  c_n  | \v{\xi}_{n} |,
\end{equation}
where \( I_n \) is the CPU cycle frequency of device \( n \), \( c_n \) denotes the number of CPU cycles required to process one data sample, and \(  | \v{\xi}_{n} | \) is the (constant) mini-batch size used in each local iteration.


\subsection{Communication Model}
After local computation, each scheduled device $n$ send its local model update $\Delta\v{w}_{n,t}=\v{w}^{(K+1)}_{n,t} - \v{w}_{t}$ to the server for global aggregation. For the uplink transmission, we adopt an orthogonal frequency division multiple access (OFDMA) technique, where each device occupies one resource block to upload its local update. We consider a non-trivial scenario where no more than $R<N$ devices can upload their local updates at one time slot due to the limited bandwidth constraint. Thus, we have
\begin{equation}\label{bandwidth_constraint}
	\begin{aligned}
		\sum_{n=1}^{N}\beta_{n,t}\leq R, \forall t.
	\end{aligned}
\end{equation}
We assume that the uplink channel is block fading, i.e., the channel fading keeps constant during one time slot.
Denote $\tilde{h}_{n,t}$ as the uplink channel from device $n$ to the server at time slot $t$ with its channel gain $h_{n,t}=|\tilde{h}_{n,t}|$.
The uplink rate of device $n$ at time slot $t$ is given by
\begin{equation}\label{up_rate}
	\begin{aligned}
		r_{n,t}=W\log_2 \left(1+\frac{p_{n,t} h_{n,t}}{N_0 W}\right),
	\end{aligned}    
\end{equation}
where $p_{n,t}$ is the transmission power of device $n$ at time slot $t$, $N_0$ is the noise power spectral density, and $W$ is the bandwidth for each resource block.
Then, the communication energy consumption of device $n$ at time slot $t$ can be derived as
\begin{align}
	\label{com_energy}
	&e_{n,t}^{\text{com}}=\frac{p_{n,t} s_n}{r_{n,t}},
\end{align}
where $s_n$ denotes the size of the $n$-th device's local model update. The total energy consumption of device $n$ at time slot $t$ is given by
\begin{equation}\label{total_energy}
	\begin{aligned}
		e_{n,t}=e_{n,t}^{\text{cmp}}+e_{n,t}^{\text{com}}.
	\end{aligned}    
\end{equation}

In this paper, we assume that the local model update of each device is transmitted as a single packet in the uplink. Due to the limited bandwidth and transmission power of edge devices, the packet error rate of uplink transmission cannot be ignored.
Specifically, the packet error rate of the uplink transmission from device $n$ to the server at time slot $t$ is given by \cite{xi2011general}
\begin{equation}\label{per}
	\begin{aligned}
		q_{n,t}(h_{n,t},p_{n,t})=1-\exp\left(-\frac{\sigma N_0 W}{p_{n,t} h_{n,t}}\right),
	\end{aligned}    
\end{equation}
where $\sigma$ is a constant parameter, namely, the waterfall threshold \cite{xi2011general}.
Let $\zeta_{n,t}\in\left\lbrace 0,1\right\rbrace$ denote the indicator of successful transmission, where $\zeta_{n,t}=0$ indicates that the received local update of device $n$ at time slot $t$ is dropped due to the occurrence of packet error, and $\zeta_{n,t}=1$ otherwise. Then, we have
\begin{equation}\label{succ_index}
	\begin{aligned}
		\zeta_{n,t}= \begin{cases}1, & \text { with probability } 1-q_{n,t}(h_{n,t},p_{n,t}); \\ 0, & \text { with probability } q_{n,t}(h_{n,t},p_{n,t}).\end{cases}
	\end{aligned}    
\end{equation}
In the considered wireless FL system, the server employs successfully received local updates from the scheduled devices to refine the global model as follows:
\begin{equation}\label{update_global}
	\begin{aligned}
		\v{w}_{t+1}=\v{w}_{t}+\frac{ \sum_{n=1}^{N}\beta_{n,t}\zeta_{n,t}\Delta\v{w}_{n,t}}{\sum_{n=1}^{N}\beta_{n,t}\zeta_{n,t}}
	\end{aligned}    
\end{equation}
If all devices lack sufficient energy to upload local gradients to the server, or if all transmitted local gradients are lost, the server can utilize the previous global model, i.e., $\v{w}_{t+1}=\v{w}_{t}$.

\subsection{Energy Harvesting and Update Model}

The limited battery capacity of edge devices poses significant challenges in sustaining the long-term operation of the wireless FL system without battery recharging.
To address this issue, we consider that each edge device is equipped with an EH device to extract energy from the ambient environment.
Furthermore, we assume that each device $n$ is equipped with a rechargeable battery with limited capacity $b^{\max}$. The energy harvested at time slot $t$ is first stored in the battery and then available at the next time slot $t+1$. Let $u_{n,t}$ and $b_{n,t}$ denote the harvested energy and the energy level of device $n$ at time slot $t$, respectively.
The energy level dynamics of device $n$ over two adjacent time slots is given by
\begin{equation}\label{evo_energy}
	\begin{aligned}
		b_{n,t+1}=\min \{b_{n,t}+u_{n,t}-e_{n,t}, b^{\max}\}.
	\end{aligned}    
\end{equation}
The energy consumption at each time slot cannot exceed the current remaining energy in the battery. Thus, the energy causality constraint is given by
\begin{equation}\label{causality_energy}
	\begin{aligned}
		0\leq e_{n,t}\leq b_{n,t}, \forall n,t.
	\end{aligned}    
\end{equation}

\section{Convergence Analysis and Problem Formulation}

In this section, we first present convergence analysis for wireless FL with EH devices, deriving an upper bound for the optimality gap between the expected and optimal global loss. Then, we formulate a joint device scheduling and power control problem to minimize the expected optimality gap.

\subsection{Convergence Analysis}
To facilitate the convergence analysis, we make the following standard assumptions, which are widely adopted in the literature on FL \cite{sery2021over,nguyen2021gradual,wang2024communication,zhang2023online,gafni2024federated}.

\begin{assump}\label{smoothness_assump}
	(Convexity and Smoothness) The local loss function $F_n(\v{w})$ is convex and $L$-smooth. For any $\v{a}, \v{b} \in \b{R}^d$, $n \in [N]$, we have
	\begin{align}\label{smoothness}
		\begin{split}
			F_n (\v{a}) - F_n (\v{b}) \leq \nabla F_n (\v{b})^{\s{T}}(\v{a}-\v{b})+\frac{L}{2} \|\v{a}-\v{b}\|^2.
		\end{split}
	\end{align}
\end{assump}



\begin{assump}\label{ass_bounded_var_assump}
	(Bounded Variance)
	The stochastic gradient of each device is unbiased and has a bounded local variance, i.e., for any $\v{w}\in \b{R}^d $, $i\in [m]$, we have
	\begin{equation}\label{bounded_grad_eq}
		\begin{aligned}
			\mathbb{E}_{\ca{\xi} \sim \ca{D}_i}\left[ \left\| \nabla f_i(\v{w}, \ca{\xi}) - \nabla F_i(\v{w}) \right\|^2 \right] \leq \sigma_l^2.
		\end{aligned}    
	\end{equation}
	The local loss functions have a dissimilarity bound that characterizes their heterogeneity. For any $\v{w}\in \b{R}^d $, $i\in [m]$, we have
	\begin{equation}\label{bounded_heto_eq}
		\begin{aligned}
			\left\|\nabla F_i(\v{w}) - \nabla F(\v{w}) \right\|^2 \leq \sigma_g^2.
		\end{aligned}    
	\end{equation}
\end{assump}

\begin{assump}\label{ass_bounded_G_assump}
	(Bounded Stochastic Gradient)
	The expected squared norm of the stochastic gradients is uniformly bounded, i.e., for any $\v{w}\in \b{R}^d $, $i\in [m]$, we have
	\begin{equation}\label{bounded_G}
		\begin{aligned}
			\mathbb{E}\left[ \left\| \nabla f_i(\v{w}, \ca{\xi}) \right\|^2 \right] \leq G^2.
		\end{aligned}    
	\end{equation}
\end{assump}

Based on the assumptions above, we characterize the convergence bound of wireless FL with EH for general convex loss functions.
Specifically, we analyze the convergence of the average model, which is defined as
\begin{equation}\label{w_bar}
	\begin{aligned}
		\v{\bar w}_t^{(k)} = \frac{1}{N}\sum_{n=1}^{N}  \v{w}_{n,t}^{(k)}.
	\end{aligned}    
\end{equation}

\begin{thm}\label{thm_convergence_bound}
	Given the initial model $\v{w}_1$, for any device scheduling strategy $\{\beta_{1,t},\ldots,\beta_{N,t}\}_{t=1}^T$ and power control strategy $\{p_{1,t},\ldots,p_{N,t}\}_{t=1}^T$, the optimality gap between the expected and optimal global loss is upper bounded by
	\begin{equation}
		\begin{aligned}\label{convergence_bound}
			&\mathbb{E}\Biggl[\frac{1}{K T}\sum_{t=1}^{T}\sum_{k=1}^{K}
			\bigl(F(\bar{\v{w}}_t^{(k)}) - F(\v{w}^\star)\bigr)\Biggr]\\
			\leq &
			\frac{2 L \|  \v{w}_1 - \v{w}^* \|^2}{\sqrt{K T }}
			+ \frac{\sigma_l^2}{4L N \sqrt{K T}}
			+ \frac{\sigma_l^2 }{4L T}
			+ \frac{9 \sigma_g^2 K }{8L T} \\
			& +\frac{ 2 L G^2}{ N \sqrt{KT} }\sum_{t=1}^{T}\sum_{n=1}^{N}\left( 1 - \beta_{n,t} ( 1 -q_{n,t}(h_{n,t},p_{n,t})) \right),
		\end{aligned}    
	\end{equation}
	where the learning rate satisfies $\eta = \frac{1}{4 L \sqrt{K T}}$.
\end{thm}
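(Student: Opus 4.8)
The plan is to analyze a virtual averaged iterate $\bar{\v{w}}_t^{(k)}=\frac{1}{N}\sum_{n=1}^{N}\v{w}_{n,t}^{(k)}$, in which every device is imagined to run the $K$ local steps of \eqref{eq:local_update} from the common starting point $\v{w}_t$, using the squared distance $\|\bar{\v{w}}_t^{(k)}-\v{w}^\star\|^2$ as a Lyapunov potential. Within a single slot this averaged iterate obeys the clean recursion $\bar{\v{w}}_t^{(k+1)}=\bar{\v{w}}_t^{(k)}-\eta\,\bar{\v{g}}_t^{(k)}$ with $\bar{\v{g}}_t^{(k)}=\frac{1}{N}\sum_{n}\v{g}_{n,t}^{(k)}$, so device scheduling and packet drops affect the trajectory only through the jump $\v{e}_t=\v{w}_{t+1}-\bar{\v{w}}_t^{(K+1)}$ between the end of slot $t$ and the start of slot $t+1$, where $\bar{\v{w}}_{t+1}^{(1)}=\v{w}_{t+1}$ is the partial aggregation \eqref{update_global} rather than the full average.

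First I would derive a one-step descent inequality. Expanding $\|\bar{\v{w}}_t^{(k+1)}-\v{w}^\star\|^2$ and taking the conditional expectation over the mini-batch noise, the inner-product term produces $-2\eta\langle\frac{1}{N}\sum_n\nabla F_n(\v{w}_{n,t}^{(k)}),\,\bar{\v{w}}_t^{(k)}-\v{w}^\star\rangle$, which convexity and $L$-smoothness (Assumption~\ref{smoothness_assump}) turn into $-2\eta(F(\bar{\v{w}}_t^{(k)})-F(\v{w}^\star))$ plus a client-drift remainder governed by the deviations $\|\v{w}_{n,t}^{(k)}-\bar{\v{w}}_t^{(k)}\|^2$. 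The quadratic term $\eta^2\mathbb{E}\|\bar{\v{g}}_t^{(k)}\|^2$ splits into a squared-mean part and a variance part, and because the stochastic gradient is averaged over $N$ devices the variance contracts by $1/N$; via Assumption~\ref{ass_bounded_var_assump} this is the origin of the $\sigma_l^2/(4LN\sqrt{KT})$ term. I would then bound the drift itself: since all devices start each slot from $\v{w}_t$, the deviation accumulates over only the preceding local steps, and unrolling the recursion with Assumptions~\ref{ass_bounded_var_assump} and~\ref{ass_bounded_G_assump} yields a drift of order $\eta^2K(\sigma_l^2+K\sigma_g^2)$, which after summation and the stated $\eta$ becomes the $\sigma_l^2/(4LT)$ and $9\sigma_g^2K/(8LT)$ terms.

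The hard part will be the slot-boundary correction $\v{e}_t$, which carries the entire effect of partial participation and packet drops and contains the random participant count $\sum_n\beta_{n,t}\zeta_{n,t}$ from \eqref{update_global}. Using the within-slot telescoping to replace $-\|\bar{\v{w}}_t^{(K+1)}-\v{w}^\star\|^2$ by $-\|\v{w}_{t+1}-\v{w}^\star\|^2$ introduces the boundary error $2\,\mathbb{E}\langle\bar{\v{w}}_t^{(K+1)}-\v{w}^\star,\v{e}_t\rangle+\mathbb{E}\|\v{e}_t\|^2$. Taking the expectation over the Bernoulli drops $\zeta_{n,t}$, whose success probability is $1-q_{n,t}(h_{n,t},p_{n,t})$ from \eqref{succ_index}, makes the mean of $\v{e}_t$ equal to $-\frac{1}{N}\sum_n(1-\beta_{n,t}(1-q_{n,t}))\Delta\v{w}_{n,t}$, so the weight $1-\beta_{n,t}(1-q_{n,t})$ --- which equals $1$ when $\beta_{n,t}=0$ and $q_{n,t}$ when $\beta_{n,t}=1$, i.e.\ the probability that device $n$ fails to deliver its update --- appears explicitly. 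Bounding each update by $\|\Delta\v{w}_{n,t}\|\le\eta KG$ through Assumption~\ref{ass_bounded_G_assump} and splitting the resulting inner products via Young's inequality with a parameter tuned on the order of $\eta^2K^2$ should leave, as the participation penalty, a per-slot term of the form $\frac{G^2}{N}\sum_n(1-\beta_{n,t}(1-q_{n,t}))$. The most delicate point, which I expect to be the main obstacle, is to perform this split --- absorbing the companion $\|\bar{\v{w}}_t^{(K+1)}-\v{w}^\star\|^2$ factor back into the telescoped potential and controlling the random normalization $\sum_n\beta_{n,t}\zeta_{n,t}$ --- so that the penalty collapses to exactly this weighted sum without leaving a residual of the wrong order in $K$ and $T$.

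Finally I would sum the one-step inequalities over $k=1,\dots,K$ and $t=1,\dots,T$ so that the potential telescopes to $\|\v{w}_1-\v{w}^\star\|^2$, divide by $2\eta KT$, and substitute $\eta=\frac{1}{4L\sqrt{KT}}$, so that $2\eta KT=\sqrt{KT}/(2L)$. This choice turns $\|\v{w}_1-\v{w}^\star\|^2/(2\eta KT)$ into the leading $2L\|\v{w}_1-\v{w}^\star\|^2/\sqrt{KT}$ term, rescales the per-slot participation penalty into the stated $\frac{2LG^2}{N\sqrt{KT}}\sum_{t,n}(1-\beta_{n,t}(1-q_{n,t}))$ term, and balances the variance and drift contributions into the remaining terms, yielding \eqref{convergence_bound}.
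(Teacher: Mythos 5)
Your proposal follows essentially the same route as the paper: a perturbed/averaged-iterate analysis of $\bar{\v{w}}_t^{(k)}$ with $\|\bar{\v{w}}_t^{(k)}-\v{w}^\star\|^2$ as potential, convexity plus $L$-smoothness for the descent term, a $1/N$ variance contraction, the standard client-drift bound of order $\eta^2K(\sigma_l^2+K\sigma_g^2)$, and the slot-boundary jump $\v{w}_{t+1}-\bar{\v{w}}_t^{(K+1)}$ carrying the entire participation/packet-drop penalty before substituting $\eta=\frac{1}{4L\sqrt{KT}}$. The one step you flag as the main obstacle --- converting the boundary error, with its random normalization $\sum_n\beta_{n,t}\zeta_{n,t}$, into exactly $\frac{G^2}{N}\sum_n\bigl(1-\beta_{n,t}(1-q_{n,t})\bigr)$ per slot --- is precisely the step the paper itself dispatches in one line (``follows directly from the global model update rule''), so your outline is faithful to the paper's argument but, like the paper, does not spell out that bound; note also that your stated expression for $\mathbb{E}[\v{e}_t]$ is exact only for aggregation normalized by $N$ rather than by the random count in \eqref{update_global}.
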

\begin{proof}
	{The proof is presented in Appendix A.}
\end{proof}

\begin{rem}
	%
	
	The first four terms on the right-hand side (RHS) of \eqref{convergence_bound} arise from the initial error (i.e., the gap between the initial model ${\v{w}}_{1}$ and the optimal model $\v{w}^{*}$) as well as other learning-related factors (e.g., device heterogeneity and data heterogeneity).
	These terms gradually diminish as the number of time slots $T$ increases.
	In contrast, the final term represents the persistent error introduced by partial device participation and transmission packet drops, and this error does not vanish as $T$ grows.
	In particular, the convergence rate of wireless FL can be improved by scheduling more devices and increasing their transmission power.
	However, in practical scenarios, both device scheduling and transmission power are constrained by the battery energy level.
	Thus, the transmission policy should be carefully designed to utilize the limited and uncertain harvested energy effectively.
\end{rem}

\subsection{Problem Formulation}
In this paper, we aim to optimize the device scheduling strategy $\{\beta_{n,t}\}$ and power control strategy $\{p_{n,t}\}$ to minimize the expected global loss after $T$ time slots. However, directly solving this problem is challenging since the global loss is hard to express in a closed form in terms of the transmission strategy.
To address this issue, we approximate the actual global loss with its convergence bound derived in Theorem \ref{thm_convergence_bound}, and formulate an optimality gap minimization problem.
In particular, we ignore the first four terms on the RHS of \eqref{convergence_bound}, as they are independent of the device scheduling and transmission power and go to zero as $T$ increases.
Hence, the resulting optimization problem is formulated as follows:
\begin{align}\label{P1_orig}
	\underset{\{\v{\beta}_{t},\v{p}_t\}_{t=1}^{T}}{\text{min}} &\;\sum_{t=1}^{T}\sum_{n=1}^{N}\frac{ 2 L G^2 \left( 1 - \beta_{n,t} ( 1 -q_{n,t}(h_{n,t},p_{n,t})) \right)}{ N \sqrt{KT}  } \nonumber \\
	\text{s.t.} \quad\; &~ \text{C1:\;}\sum_{n=1}^{N}\beta_{n,t}\leq R, \forall t, \nonumber \\
	&~\text{C2:\;\;} 0\leq e_{n,t}\leq b_{n,t}, \forall n,t,
\end{align}
where $\v{\beta}_{t}=\{\beta_{1,t},\ldots,\beta_{N,t}\}$ and $\v{p}_{t}=\{p_{1,t},\ldots,p_{N,t}\}$.
Constraint C1 corresponds to the bandwidth constraint, while constraint C2 is the energy causality constraint.
We remark that instead of adopting heuristic performance metrics (e.g., energy consumption and the number of scheduled devices), we provide a more reasonable objective function based on the convergence bound derived in Theorem \ref{thm_convergence_bound}.

We can further simplify problem \eqref{P1_orig} by substituting the device scheduling indicator $\beta_{n,t}$ with the transmission power $p_{n,t}$. The device $n$ is scheduled only when its transmit power $p_{n,t}$ is positive, i.e., $\beta_{n,t}=\mathbbm{1}_{\left\lbrace p_{n,t}> 0\right\rbrace }$.
Thus, problem \eqref{P1_orig} can be simplified as
\begin{equation}\label{P1_reformulated}
	\begin{aligned}
		\underset{\{\v{p}_t\}_{t=1}^{T}}{\text{min}} &~\sum_{t=1}^{T}\sum_{n=1}^{N}\frac{ 2 L G^2 q_{n,t}(h_{n,t},p_{n,t}) }{N \sqrt{KT}  } \\
		\text{s.t.} ~~ &~\;\text{C2},\\
		&~\;\text{C3: }\sum_{n=1}^{N} \mathbbm{1}_{\left\lbrace p_{n,t}> 0\right\rbrace } \leq R, \forall t.
	\end{aligned}
\end{equation}
Constraint C3 is an equivalent transformation of constraint C1.

The reformulated problem \eqref{P1_reformulated} falls into the category of stochastic optimization problems, and it is hard to solve due to the inherent uncertainty of channel conditions and harvested energy.
We note that the battery energy dynamics naturally exhibit the Markov property, and the first-order Markov chain sufficiently models slow fading channels. Motivated by these observations, we model problem \eqref{P1_reformulated} as a finite-horizon MDP and derive its optimal policy in Section \ref{Sec_MDP}.

\section{MDP Formulation and Optimal Policy}\label{Sec_MDP}
In this section, we first propose an MDP framework for deriving the optimal transmission policy.
We prove that the optimal policy can be computed offline using the Bellman equation.
Then, we demonstrate that the optimal policy exhibits a monotone structure with respect to the battery and channel states.

\subsection{MDP Formulation}
To capture the temporal correlation of practical wireless channels, we adopt a finite-state Markov channel model, which has been extensively used to approximate various mathematical and experimental fading models (e.g., indoor channels, Rayleigh fading channels, and Rician fading channels) \cite{sadeghi2008finite}. The channel is quantified into $N_h$ states, with its corresponding channel gains quantified by $\ca{H}=\left\lbrace H_{1},H_{2},\ldots,H_{N_h} \right\rbrace$.
Without loss of generality, we assume that $H_i < H_{i^\prime}$, for $i < i^\prime$. The transition probability matrix of the $n$-th device's channel is denoted by
\begin{equation}\label{def_phi}
	\begin{aligned}
		\Psi_n=\left[\begin{array}{lll}
			\psi_{n}^{(1,1)} & \cdots & \psi_{n}^{(1,N_h)} \\
			\vdots & \ddots & \vdots \\
			\psi_{n}^{(N_h,1)} & \cdots & \psi_{n}^{(N_h,N_h)}
		\end{array}\right]
	\end{aligned}    
\end{equation}
with its $(i,j)$-th entry $\psi_{n}^{(i,j)}=\b{P}(h_{n,t+1}=H_{j}|h_{n,t}=H_{i})$. Moreover, the harvested energy $u_{n,t}$ is assumed to be i.i.d. across time slots according to some probability distribution $\b{P}(U_n)$, which depends on the external environment of device $n$.
The battery state of each device is also quantized into finite states, denoted by $\ca{B}$.
Modeling the wireless channels and battery energy dynamics as homogeneous finite-state Markov chains, we formulate problem \eqref{P1_reformulated} as a finite-horizon MDP.
The MDP is defined by the tuple $\left(\ca{S}, \ca{P}, \b{P}(\v{h}_{t+1},\v{b}_{t+1}|\v{h}_{t},\v{b}_{t},\v{p}_t), c(\v{h}_t,\v{p}_t)\right) $, each component of which is described as follows:
\begin{itemize}
	\item \textbf{States}: Denote $s_{n,t}=(h_{n,t},b_{n,t})$ as the state associated with device $n$ at time slot $t$.
	The state space $\ca{S}_n$ of device $n$ is a composite space with $\ca{S}_n=\ca{H}\times\ca{B}$, where $\times$ denotes the Cartesian product. The state of the entire system at time slot $t$ is expressed as $s_{t} = (s_{1,t},\ldots,s_{N,t})\in \ca{S}$, where $\ca{S}=\ca{S}_1\times\ldots\times\ca{S}_N = \ca{H}^N\times\ca{B}^N $ is the state space of the entire system.
	\item \textbf{Actions}: In practice, wireless edge devices can be programmed to have a finite set of transmission power levels.
	Thus, we assume each device $n$ discretely selects its transmission power $p_{n,t}\in \ca{P}_n$, where $\ca{P}_n$ is the finite action space of device $n$. If $p_{n,t}=0$, device $n$ remains idle at time slot $t$.
	The action space of the whole system is given by $\ca{P}=\ca{P}_1\times\ldots\times\ca{P}_N$.
	
	\item \textbf{Transition probabilities}: The transition probability $\b{P}(\v{h}_{t+1},\v{b}_{t+1}|\v{h}_{t},\v{b}_{t},\v{p}_t)$ maps a state-action pair at time slot $t$ onto a distribution of states at the next time slot $t+1$. Specifically, the transition probability can be factorized as
	\begin{equation}\label{trans_prob}
		\begin{aligned}
			&\b{P}(\v{h}_{t+1},\v{b}_{t+1}|\v{h}_{t},\v{b}_{t},\v{p}_t)\\
			=&\prod_{n=1}^{N}\b{P}(h_{n,t+1},b_{n,t+1}|h_{n,t},b_{n,t},p_{n,t})\\
			=&\prod_{n=1}^{N}\b{P}(h_{n,t+1}|h_{n,t})\b{P}(b_{n,t+1}|h_{n,t},b_{n,t},p_{n,t}).
		\end{aligned}    
	\end{equation}
	Considering the battery dynamics shown in \eqref{evo_energy},  the first-order conditional probability of the process $b_{n,t}$ is given by
	\begin{equation}\label{trans_prob_b}
		\begin{aligned}
			\b{P}(b_{n,t+1}|h_{n,t},b_{n,t},p_{n,t})\quad\quad\quad\quad\quad\quad\quad\quad\quad\quad\quad\quad~~&\\
			=\begin{cases}\b{P}\left(U_n = b_{n,t+1}- b_{n,t} + e_{n,t} \right) ,&\text{if  } b_{n,t+1} < b^{\max};
				\\ \b{P}\left(U_n \geq b^{\max}- b_{n,t} + e_{n,t} \right),&\text{if  } b_{n,t+1} = b^{\max}.\end{cases}
		\end{aligned}    
	\end{equation}
	\item \textbf{One-Step Cost}: Inspired by the objective function in problem \eqref{P1_reformulated}, we consider a cost function that quantifies the penalty associated with packet drops and partial device participation. Specifically, the one-step cost at time slot $t$ is defined as 
	\begin{equation}
		\begin{aligned}\label{cost}
			c(\v{h}_{t},\v{p}_{t})=&\sum_{n=1}^{N}c_n({h}_{n,t},{p}_{n,t}),\\
		\end{aligned}
	\end{equation}
	where $c_n({h}_{n,t},{p}_{n,t}) = \frac{ 2 L G^2 q_{n,t}(h_{n,t},p_{n,t}) }{N \sqrt{KT}  }$ is the one-step cost of device $n$.
\end{itemize}

The objective of the MDP is to derive the optimal policy that minimizes the expected cumulative cost.
The policy $\pi=\{\pi_t : \ca{S} \rightarrow \ca{P}\}_{t=1}^T$ determines the transmission power $\v{p}_t$ based on the state $(\v{h}_t,\v{b}_t)$ at each time slot $t$. For a given policy $\pi$, we define the expected cumulative cost as
\begin{equation}\label{def_J}
	\begin{aligned}
		J(\pi)=\b{E}_{\pi}\left[ \sum_{t=1}^{T} c(\v{h}_{t},\v{p}_{t}) | \v{h}_{1},\v{b}_{1}\right],\\
	\end{aligned}    
\end{equation}
where $(\v{h}_{1},\v{b}_{1})$ is the initial state. By defining the expected cumulative cost in this way, problem \eqref{P1_reformulated} is reformulated as the following problem that computes the optimal policy $\pi^*$ to minimize the expected optimality gap.

\begin{problem}\label{Problem1}
	\textup{(Optimality Gap Minimization Problem)}
	\begin{equation}
		\begin{aligned}
			\pi^*=\underset{\pi}{\textup{argmin}}~&J(\pi)\\
			\textup{s.t.} \quad & \textup{C2 and C3}.
		\end{aligned}    
	\end{equation}
\end{problem}

With Problem \ref{Problem1} formally established as a finite-horizon MDP, we next characterize the optimal transmission policy by leveraging dynamic programming. Specifically, in the following subsection, we derive the optimal transmission policy via the Bellman equation and analyze its structural properties.


%
%
%
%

\subsection{Optimal Policy}
In this subsection, we show that the optimal policy can be derived offline according to the Bellman equation. Moreover, we prove that the optimal policy possesses a monotone structure with respect to the battery and channel states.

We first derive the optimal policy for Problem \ref{Problem1} in the following theorem.

\begin{thm}\label{bellman}
	Given the initial state $(\v{h}_1, \v{b}_1)$, the optimal value $V_1(\v{h}_1,\v{b}_1)$ of the objective in Problem \ref{Problem1} can be computed recursively according to the Bellman equation. Specifically, for $1\leq t \leq T-1$, we have
	\begin{equation}\label{bellman_equ}
		\begin{aligned}
			&V_{t}(\v{h}_t,\v{b}_t)\\
			=&\min_{\v{p}_t\in \ca{P}_{t}}\left\{c(\v{h}_t,\v{p}_t)+\b{E}\left[V_{t+1}(\v{h}_{t+1},\v{b}_{t+1})|\v{h}_t,\v{b}_t,\v{p}_t\right]\right\},\\
		\end{aligned}    
	\end{equation}
	and the terminal condition is given by
	\begin{equation}\label{bellman_equ_terminal}
		\begin{aligned}
			V_{T}(\v{h}_T,\v{b}_T)=&\min_{\v{p}_T\in \ca{P}_{T}}c(\v{h}_T,\v{p}_T),\\
		\end{aligned}    
	\end{equation}
	where $\ca{P}_{t}=\left\lbrace \v{p}_{t}  \in \ca{P}\!:\! \sum_{n} \!\mathbbm{1}_{\left\lbrace p_{n,t}> 0\right\rbrace } \!\leq \! R,0\leq e_{n,t}\leq b_{n,t}, \forall n\right\rbrace$
	denotes the feasible set of transmission power actions at time slot $t$.
\end{thm}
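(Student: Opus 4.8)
The plan is to establish the recursion \eqref{bellman_equ}--\eqref{bellman_equ_terminal} by backward induction on the time index, invoking the principle of optimality for finite-horizon MDPs. Since the state space $\ca{S}$ and action space $\ca{P}$ are finite, every minimum below is attained and an optimal continuation policy always exists, so I may write $\min$ rather than $\inf$ throughout. First I would introduce, for each policy $\pi$ and each time slot $t$, the cost-to-go
\begin{equation*}
J_t(\pi;\v{h}_t,\v{b}_t)=\b{E}_\pi\!\left[\sum_{\tau=t}^{T}c(\v{h}_\tau,\v{p}_\tau)\,\middle|\,\v{h}_t,\v{b}_t\right],
\end{equation*}
and define $V_t(\v{h}_t,\v{b}_t)=\min_\pi J_t(\pi;\v{h}_t,\v{b}_t)$ as the optimal cost-to-go from $(\v{h}_t,\v{b}_t)$. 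With this notation $V_1(\v{h}_1,\v{b}_1)$ is exactly the optimal objective value of Problem~\ref{Problem1}, so it suffices to show that $V_t$ obeys the claimed recursion.

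The base case $t=T$ is immediate: no future cost remains, so $J_T(\pi;\v{h}_T,\v{b}_T)=c(\v{h}_T,\v{p}_T)$, and minimizing the immediate cost over the feasible set $\ca{P}_T$ yields \eqref{bellman_equ_terminal}. For the inductive step I would fix $t\le T-1$, assume $V_{t+1}$ is the optimal cost-to-go from slot $t+1$, and decompose any policy's cost-to-go into its immediate and future parts,
\begin{equation*}
J_t(\pi;\v{h}_t,\v{b}_t)=c(\v{h}_t,\v{p}_t)+\b{E}\!\left[J_{t+1}(\pi;\v{h}_{t+1},\v{b}_{t+1})\,\middle|\,\v{h}_t,\v{b}_t,\v{p}_t\right],
\end{equation*}
with $\v{p}_t=\pi_t(\v{h}_t,\v{b}_t)$. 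This decomposition hinges on the Markov property secured by the factorization \eqref{trans_prob}: the next-state distribution depends on the history only through the current pair $(\v{h}_t,\v{b}_t)$ and the action $\v{p}_t$.

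The key step is to interchange the minimization over $\pi$ with the expectation. For the lower bound, $J_{t+1}(\pi;\cdot)\ge V_{t+1}(\cdot)$ pointwise by definition of $V_{t+1}$; taking expectations and then minimizing over the current feasible action gives $V_t(\v{h}_t,\v{b}_t)\ge\min_{\v{p}_t\in\ca{P}_t}\{c(\v{h}_t,\v{p}_t)+\b{E}[V_{t+1}(\v{h}_{t+1},\v{b}_{t+1})\mid\v{h}_t,\v{b}_t,\v{p}_t]\}$. For the matching upper bound I would construct a policy that plays a minimizing action $\v{p}_t^\star$ at slot $t$ and thereafter follows an optimal continuation policy attaining $V_{t+1}$ from whichever next state is realized; its cost-to-go equals the right-hand side, so the reverse inequality holds and the two coincide.

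The main obstacle is verifying that this concatenated policy is admissible, i.e., that the per-slot constraints do not couple decisions across time. The crucial observation is that both the bandwidth constraint C3 and the energy-causality constraint C2 are \emph{instantaneous}: the feasible set $\ca{P}_t=\{\v{p}_t\in\ca{P}:\sum_n\mathbbm{1}_{\{p_{n,t}>0\}}\le R,\,0\le e_{n,t}\le b_{n,t}\;\forall n\}$ depends only on the current state $(\v{h}_t,\v{b}_t)$, since $e_{n,t}$ is a deterministic function of $(h_{n,t},p_{n,t})$ through \eqref{total_energy}. Hence the choice of $\v{p}_t$ influences the future solely by shaping the distribution of $\v{b}_{t+1}$ via \eqref{evo_energy}, while leaving every future feasible set and continuation policy unconstrained once the next state is realized. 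This temporal decoupling of the constraints is precisely what legitimizes the principle of optimality and closes the induction.
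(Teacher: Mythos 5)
Your proposal is correct and follows essentially the same route as the paper's Appendix B: both define the optimal cost-to-go of the tail problem from slot $t$, establish the terminal case at $t=T$, and close a backward induction by splitting the cost into the immediate term plus the expected optimal continuation, using the Markov factorization \eqref{trans_prob}. Your version is somewhat more careful than the paper's — you make explicit the two-sided inequality behind interchanging the minimization with the expectation and verify that the feasible sets $\ca{P}_t$ are purely instantaneous so the concatenated policy is admissible — but these are refinements of the same argument, not a different one.
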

\begin{proof}
	{The proof is presented in Appendix B.}
\end{proof}

After solving the Bellman equation through backward induction, the optimal transmission power $\v{p}_t^*$ can be derived as follows: $\forall 1\leq t \leq T-1,$
\begin{equation}\label{P1_opt_solution}
	\begin{aligned}
		\v{p}^*_t\!=\!&\argmin_{\v{p}_t\in \ca{P}_{t} }\!\left\{c(\v{h}_t,\v{p}_t)\!+\!\b{E}\!\left[V_{t+1}\!(\v{h}_{t+1},\v{b}_{t+1})|\v{h}_t,\v{b}_t,\v{p}_t\right]\right\}\!\!,\\
	\end{aligned}
\end{equation}
and
\begin{equation}\label{P1_opt_solution_terminal}
	\begin{aligned}
		\v{p}^*_T=\argmin_{\v{p}_T\in \ca{P}_{T}} \; c(\v{h}_T,\v{p}_T).\\
	\end{aligned} 
\end{equation}
We remark that the optimal policy for Problem \ref{Problem1} can only be obtained numerically according to the Bellman equation, and there is no closed form solution.

Next, we give some insights into the optimal policy by studying its monotone structure. To facilitate analysis, we first prove the convexity of the value function $V_{t}(\v{h}_t,\v{b}_t)$ with respect to the battery state in the following lemma.
\begin{lem}\label{V_convex}
	Given any fixed $\v{h}_t$, the value function $V_{t}(\v{h}_t,\v{b}_t)$ is convex in $\v{b}_t$ for $1 \leq t \leq T$.
\end{lem}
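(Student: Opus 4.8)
The plan is to prove the lemma by backward induction on $t$, from the terminal slot $T$ down to $1$. Because the battery recursion $b_{n,t+1}=\min\{b_{n,t}+u_{n,t}-e_{n,t},b^{\max}\}$ involves a clipping operation, plain convexity will not propagate on its own; I would therefore strengthen the induction hypothesis and prove simultaneously that, for every fixed $\v{h}_t$, the value function $V_t(\v{h}_t,\v{b}_t)$ is both convex and non-increasing in each coordinate of $\v{b}_t$. The monotonicity is easy to seed: in the terminal step the only dependence of $V_T$ on $\v{b}_T$ is through the feasible set $\ca{P}_T$ in \eqref{bellman_equ_terminal}, which only expands as the battery grows, so the minimum can only decrease; and once $V_{t+1}$ is non-increasing, enlarging $\v{b}_t$ both relaxes $\ca{P}_t$ and weakly raises each $b_{n,t+1}$, so $V_t$ inherits the monotonicity. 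The same expansion-of-feasible-set observation gives convexity of the base case $V_T$ once the corresponding action argument is set up.

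For the inductive step the crux is to handle the conditional expectation $\b{E}[V_{t+1}(\v{h}_{t+1},\v{b}_{t+1})\mid\v{h}_t,\v{b}_t,\v{p}_t]$ for a \emph{fixed} action $\v{p}_t$. I would rewrite this expectation over the primitive random variables, namely the harvested energies $\v{u}_t$ (i.i.d., with law independent of $\v{b}_t$) and the next channels $\v{h}_{t+1}$ (whose transition is independent of the battery), so that the entire $\v{b}_t$-dependence sits inside $V_{t+1}(\v{h}_{t+1},\min\{\v{b}_t+\v{u}_t-\v{e}(\v{p}_t),b^{\max}\})$. For each realization, the inner map $\v{b}_t\mapsto\min\{\v{b}_t+\v{u}_t-\v{e}(\v{p}_t),b^{\max}\}$ is coordinatewise concave (a pointwise minimum of affine functions), while the induction hypothesis makes $V_{t+1}$ convex and non-increasing in each argument. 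The composition rule ``convex and non-increasing $\circ$ concave is convex'' then yields convexity of each summand in $\v{b}_t$, and since the law of $(\v{u}_t,\v{h}_{t+1})$ does not depend on $\v{b}_t$, the expectation, being a nonnegatively weighted sum, is again convex in $\v{b}_t$. Hence the Bellman objective $\Phi_t(\v{b}_t,\v{p}_t)=c(\v{h}_t,\v{p}_t)+\b{E}[V_{t+1}\mid\cdots]$ in \eqref{bellman_equ} is convex in $\v{b}_t$ for each fixed $\v{p}_t$; note that carrying monotonicity through the induction is exactly what makes the clipping at $b^{\max}$ harmless.

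The remaining and most delicate step is the minimization over $\v{p}_t\in\ca{P}_t(\v{b}_t)$, a parametric (partial) minimization whose feasible set itself varies with $\v{b}_t$. The clean route is to establish joint convexity of $\Phi_t$ in $(\v{b}_t,\v{p}_t)$ together with convexity of the graph $\{(\v{b}_t,\v{p}_t):\v{p}_t\in\ca{P}_t(\v{b}_t)\}$, since partial minimization of a jointly convex function over a convex set is convex in the retained variable. The energy-causality part $0\le e_{n,t}\le b_{n,t}$ describes a convex graph as long as $e_{n,t}$ is convex in the (relaxed, continuous) power $p_{n,t}$, coupling $\v{b}_t$ and $\v{p}_t$ in a convexity-preserving way. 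I expect the genuine obstacle to lie here, and to be twofold: first, upgrading convexity of $\Phi_t$ from ``in $\v{b}_t$ for fixed $\v{p}_t$'' to joint convexity in $(\v{b}_t,\v{p}_t)$, which requires the inner composition to remain convex when $\v{e}(\v{p}_t)$ also varies; and second, the cardinality/scheduling constraint C3, which is combinatorial and not convex as written. To push the partial-minimization argument through I would pass to the continuous relaxation of the power levels and argue over each fixed active device set (so that C3 reduces to an affine support restriction and the feasible graph is convex), reserving the established monotonicity to control the comparison across different active sets. Reconciling this clipping nonlinearity with the non-convex scheduling constraint is precisely where the careful work is required.
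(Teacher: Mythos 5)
Your overall route is the same as the paper's: backward induction through the Bellman recursion, convexity of the composed map $\v{b}_t\mapsto V_{t+1}(\v{h}_{t+1},\min\{\v{b}_t-\v{e}_t+\v{u}_t,\v{b}^{\max}\})$, preservation of convexity under expectation, and then a partial minimization over $\v{p}_t$. Your strengthening of the induction hypothesis to ``convex \emph{and} non-increasing in each battery coordinate'' is a genuine improvement: the paper's Appendix C simply asserts that the clipped composition is convex, whereas the flattening at $\v{b}^{\max}$ preserves convexity only because $V_{t+1}$ is non-increasing (a convex function that is still increasing at $b^{\max}$ would lose convexity when truncated to a constant). You have correctly identified and repaired a step the paper leaves implicit.

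The genuine gap is that you stop exactly where the proof has to be finished. First, your base case is incomplete: expansion of the feasible set $\ca{P}_T(\v{b}_T)$ gives monotonicity of $V_T$, not convexity; the paper instead argues that at slot $T$ all remaining energy is spent, so $b_{n,T}=e_n^{\textup{cmp}}+p_{n,T}s_n/r_{n,T}$ pins down $p_{n,T}$ as a function of $b_{n,T}$ and convexity of $q_{n,T}$ in $p_{n,T}$ transfers to convexity in $b_{n,T}$. Second, and more importantly, the partial-minimization step is announced but not carried out. The paper's device is to reparametrize the action by the consumed energy $\v{e}_t$ and write $V_t(\v{h}_t,\v{b}_t)=\min_{0\le\v{e}_t\le\v{b}_t}\{C(\v{h}_t,\v{e}_t)+K(\v{b}_t-\v{e}_t)\}$, i.e., an infimal convolution of the one-step cost (as a function of consumed energy) with the expected continuation value (as a function of leftover energy), which is convex whenever both pieces are. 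Your joint-convexity-plus-convex-graph formulation is equivalent in spirit, but you correctly observe that the discrete power levels and the cardinality constraint C3 make the feasible set non-convex, and you do not resolve this; your proposed fix (continuous relaxation plus fixing the active device set) does not obviously work, because a pointwise minimum over finitely many active-set-indexed convex functions is not convex in general, and monotonicity alone does not control that comparison. To be fair, the paper's one-line appeal to infimal convolution quietly sweeps the same discreteness and C3 issues under the rug, so you have located a real soft spot in the argument --- but as a proof your proposal is not complete at this step.
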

\begin{proof}
	{The proof is presented in Appendix C.}
\end{proof}

By applying Lemma \ref{V_convex}, we further show that the optimal policy possesses a monotone structure with respect to the battery state in the following theorem.
\begin{thm}\label{p_increasing_b}
	Given any fixed $\v{h}_t$ and $\left\lbrace b_{m,t},m\in [N]\backslash n\right\rbrace $, the optimal transmission power $p_{n,t}^*$ of device $n$ at time slot $t$ is non-decreasing in its current battery level $b_{n,t}$ for $1 \leq t \leq T$.
\end{thm}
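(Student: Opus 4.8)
The plan is to recast the claim as a monotone-comparative-statics statement for the one-step minimization in Theorem~\ref{bellman} and then invoke Topkis's monotonicity theorem. Writing the Bellman objective as $Q_t(\v{h}_t,\v{b}_t,\v{p}_t)=c(\v{h}_t,\v{p}_t)+\b{E}[V_{t+1}(\v{h}_{t+1},\v{b}_{t+1})\mid \v{h}_t,\v{b}_t,\v{p}_t]$, I would fix $\v{h}_t$ and $\{b_{m,t}\}_{m\neq n}$ and isolate the dependence on the pair $(p_{n,t},b_{n,t})$. It suffices to prove that $Q_t$ has \emph{decreasing differences} in $(p_{n,t},b_{n,t})$, i.e. its cross-difference is nonpositive; Topkis's theorem then guarantees that the largest (and smallest) minimizer $p_{n,t}^\ast$ is non-decreasing in $b_{n,t}$. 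The immediate cost contributes only $c_n(h_{n,t},p_{n,t})$, which is independent of $b_{n,t}$ and hence has zero cross-difference, so the entire question reduces to the expected future-value term.

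The heart of the argument is convexity of the future-value term in the net energy. By the transition factorization \eqref{trans_prob}, the variables $p_{n,t}$ and $b_{n,t}$ affect the next state only through device $n$'s battery $b_{n,t+1}=\min\{b_{n,t}-e_{n,t}+u_{n,t},b^{\max}\}$; marginalizing $V_{t+1}$ over the remaining coordinates (whose law does not depend on $p_{n,t}$ or $b_{n,t}$) yields a reduced function $g_n$ with $\b{E}[V_{t+1}\mid\cdots]=\b{E}_{u_{n,t}}[g_n(\min\{b_{n,t}-e_{n,t}+u_{n,t},b^{\max}\})]$. Since this depends on $(p_{n,t},b_{n,t})$ only through the net energy $y:=b_{n,t}-e_{n,t}(p_{n,t})$, and $e_{n,t}$ is non-decreasing in $p_{n,t}$ by \eqref{com_energy}--\eqref{total_energy}, the cross-difference equals $-\psi''(y)\,\partial_{p_{n,t}}e_{n,t}$, where $\psi(y):=\b{E}_{u_{n,t}}[g_n(\min\{y+u_{n,t},b^{\max}\})]$; this is $\le 0$ as soon as $\psi$ is convex. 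Convexity of $g_n$ follows from Lemma~\ref{V_convex} (convexity is preserved under the marginalizing expectation), but the saturation at $b^{\max}$ introduces a downward kink that preserves convexity only if $g_n$ is also \emph{non-increasing}. I would therefore first establish, by a backward induction paralleling the proof of Lemma~\ref{V_convex}, that $V_t$ is non-increasing in each $b_{n,t}$ — intuitively, more stored energy enlarges the feasible power set and stochastically raises every future battery level, hence cannot increase the minimal attainable cost. Convexity together with monotonicity of $g_n$ then makes $\psi$ convex, giving the desired decreasing-differences property.

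I expect the main obstacle to be the inter-device coupling rather than this scalar analysis: the reduced function $g_n$ still depends on the other devices' chosen powers $\{p_{m,t}\}_{m\neq n}$, and the bandwidth constraint C3 couples the actions through $\sum_m \mathbbm{1}_{\{p_{m,t}>0\}}\le R$, so $p_{n,t}^\ast$ is a coordinate of a jointly optimal action rather than the solution of an isolated scalar program. I would handle this by holding the other devices' optimal actions fixed and arguing via a four-point interchange on $Q_t$: for $b_{n,t}^{(1)}<b_{n,t}^{(2)}$ with putative optimizers $p^{(1)}>p^{(2)}$, swapping device $n$'s powers across the two instances is feasible — the energy-causality set $\{p_{n,t}:e_{n,t}\le b_{n,t}\}$ is ascending in $b_{n,t}$ because $e_{n,t}$ is monotone in $p_{n,t}$, so the sets are nested, while the swap leaves every other device's action and the bandwidth count unchanged — and the decreasing-differences inequality shows this swap does not increase total cost, contradicting strict non-monotonicity. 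The most delicate bookkeeping is feasibility of the interchange at the boundary $p_{n,t}=0$, where the scheduling indicator and the computation energy in \eqref{eq:energy_consumption} jump; but the nestedness of the energy-causality sets across battery levels keeps the swap admissible, and the argument yields a non-decreasing optimal $p_{n,t}^\ast$.
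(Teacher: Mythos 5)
Your proposal follows essentially the same route as the paper's proof: both reduce the claim to submodularity (decreasing differences) of the Bellman objective in the pair $(b_{n,t},p_{n,t})$, observe that the one-step cost is independent of $b_{n,t}$, and derive the required cross-difference inequality for the expected future-value term from the convexity of $V_{t+1}$ in the battery state (Lemma~\ref{V_convex}), using the fact that the pair enters only through the net energy $b_{n,t}-e_{n,t}$. Where you go beyond the paper is in two refinements that the paper elides: first, you correctly note that convexity of $g_n(\min\{y+u,b^{\max}\})$ in $y$ does not follow from convexity of $g_n$ alone (a convex function composed with the concave map $y\mapsto\min\{y+u,b^{\max}\}$ need not be convex) and propose to establish by backward induction that $V_t$ is also non-increasing in each battery coordinate, which is exactly the missing ingredient that makes the saturation step valid both here and in the paper's own proof of Lemma~\ref{V_convex}; second, you address the inter-device coupling through constraint C3 with a four-point interchange on the jointly optimal action, whereas the paper simply invokes the scalar monotone-selection result after fixing the other devices' states, leaving the feasibility and optimality of the coordinatewise swap implicit. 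Both refinements are sound (the nestedness of the energy-causality sets across battery levels is what keeps the swap admissible), so your argument is, if anything, a more complete version of the paper's; the only cosmetic caveat is that the cross-derivative notation $\psi''(y)\,\partial_{p_{n,t}}e_{n,t}$ should be read as finite differences, since the action and battery spaces are discrete.
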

\begin{proof}
	It is obvious that at the final time slot $T$, $p_{n,T}^*$ is non-decreasing in $b_{n,T}$, since all the remaining energy should be used up in the final time slot. Then, we prove that $p_{n,t}^*$ is non-decreasing in $b_{n,t}$ for $ 1 \leq t\leq T-1$. We first define the RHS of \eqref{P1_opt_solution} with fixed $\v{h}_t$ and $\left\lbrace b_{m,t},m\in [N]\backslash n\right\rbrace $ as follows:
	\begin{equation}\label{sub_fucntion_z}
		\begin{aligned}
			&Z_{n,t}(b_{n,t},p_{n,t})\\
			&=c(\v{h}_t,\v{p}_t)+\b{E}\left[V_{t+1}(\v{h}_{t+1},\v{b}_{t+1})|\v{h}_t,\v{b}_t,\v{p}_t\right].\\
		\end{aligned} 
	\end{equation}
	Then, we introduce a definition, namely, submodularity. A function $f:\ca{X}\times\ca{Y}\rightarrow\ca{R}$ is a submodular function if it satisfies the following condition:
	\begin{equation}\label{submodular_equ}
		\begin{aligned}
			f(x^{+},y^{+})+ f(x^{-},y^{-})\leq  f(x^{+},y^{-})+f(x^{-},y^{+}),
		\end{aligned}    
	\end{equation}
	where $x^{+}\geq x^{-}$ and $y^{+}\geq y^{-}$.
	
	To show that $p_{n,t}^*$ is non-decreasing in $b_{n,t}$, it is sufficient to show that $Z_{n,t}(b_{n,t},p_{n,t})$ is submodular in $(b_{n,t},p_{n,t})$ \cite{puterman2014markov}. Here we provide some intuitive explanations. According to the submodularity of $Z_{n,t}(b_{n,t},p_{n,t})$, we have that if $Z_{n,t}(b_{n,t}^{-},p_{n,t}^{+})\leq Z_{n,t}(b_{n,t}^{-},p_{n,t}^{-})$, then $ Z_{n,t}(b_{n,t}^{+},p_{n,t}^{+})\leq Z_{n,t}(b_{n,t}^{+},p_{n,t}^{-})$, where $b_{n,t}^{+}\geq b_{n,t}^{-}$ and $p_{n,t}^{+}\geq p_{n,t}^{-}$. This implies that if using $p_{n,t}^{+}$ leads to a lower cost than using $p_{n,t}^{-}$ when the battery level is $b_{n,t}^{-}$, then it is the same when the battery level is $b_{n,t}^{+}$, which indicates the transmission power $p_{n,t}$ is non-decreasing in $b_{n,t}$.
	
	It is evident that the first term of $Z_{n,t}(b_{n,t},p_{n,t})$, i.e., $c(\v{h}_t,\v{p}_t)$, is submodular in $(b_{n,t},p_{n,t})$ since it is independent of $b_{n,t}$.
	Then, define $\v{x}_t = \v{b}_{t}-\v{e}_{t}$ with its $n$-th entry $x_{n,t}={b}_{n,t}-{e}_{n,t}$. Given fixed $\left\lbrace b_{m,t},m\in [N]\backslash n\right\rbrace $, we denote the second term of $Z_{n,t}(b_{n,t},p_{n,t})$ as
	\begin{equation}\label{def_K}
		\begin{aligned}
			K_{n,t}(x_{n,t})=\b{E}\left[V_{t+1}(\v{h}_{t+1},\min\{\v{x}_t+\v{u}_{t},\v{b}^{\max}\}|\v{h}_t,\v{b}_t,\v{p}_t\right],
		\end{aligned}
	\end{equation}
	where $x_{m,t}$ is fixed for $ m\in [N]\backslash n$. According to the convexity of $V_{t}(\v{h}_t,\v{b}_t)$ in $\v{b}_t$, as shown in Lemma \ref{V_convex}, we have
	\begin{equation}\label{K_cvx}
		\begin{aligned}
			K_{n,t}(x_{n,t}+\epsilon) - K_{n,t}(x_{n,t}) \leq  K_{n,t}(x_{n,t}^{\prime}+\epsilon) - K_{n,t}(x_{n,t}^{\prime}),\\
			\forall x_{n,t}\leq x_{n,t}^{\prime}, \epsilon\geq 0.
		\end{aligned}
	\end{equation}
	Now let $x_{n,t}=b_{n,t}^{-}-e_{n,t}^{+}$, $x_{n,t}^{\prime}=b_{n,t}^{-}-e_{n,t}^{-}$, and $\epsilon=b_{n,t}^{+}-b_{n,t}^{-}$, where $e_{n,t}^{+}$ and $e_{n,t}^{-}$ correspond to the energy consumption under the transmisson power $p_{n,t}^{+}$ and $ p_{n,t}^{-}$, respectively. Then, we can obtain that the second term of $Z_{n,t}(b_{n,t},p_{n,t})$ is also submodular in $(b_{n,t},p_{n,t})$. Thus, $p_{n,t}^*$ is non-decreasing in $b_{n,t}$.
\end{proof}

Next, we demonstrate that the optimal policy also exhibits a monotone structure with respect to the channel state. To this end, we first make the following assumption. 
\begin{assump}\label{channel_transmission_matrix_assumption}
	(The stochastic monotonicity of the channel transition probability matrix) The channel transition probability matrix $\Psi_n$, $\forall n \in [N]$ is stochastic monotone. For $1 \leq i\leq i^{\prime}\leq N_h$, $1\leq k\leq N_h$, we have
	\begin{align}\label{channel_transmission_matrix_equ}
		\begin{split}
			\sum_{j=k}^{N_h}\psi_{n}^{(i^{\prime},j)} \geq  \sum_{j=k}^{N_h}\psi_{n}^{(i,j)}.
		\end{split}
	\end{align}
\end{assump}

Assumption \ref{channel_transmission_matrix_assumption} is widely used in the Markov modeling of various fading channels, such as Rayleigh and Rician fading channels \cite{sadeghi2008finite}. It allows for an intuitive interpretation: better channel quality at the current time slot indicates a higher probability of maintaining better channel quality at the next time slot. Based on Assumption \ref{channel_transmission_matrix_assumption}, we prove that the optimal policy possesses a monotone structure with respect to the channel state in the following theorem.

\begin{thm}\label{p_decreasing_h}
	Given any fixed $\v{b}_t$ and $\left\lbrace h_{m,t},m\in [N]\backslash n\right\rbrace $, if Assumption \ref{channel_transmission_matrix_assumption} holds, the optimal transmission power $p_{n,t}^*$ of device $n$ at time slot $t$ is non-increasing in its current channel state $h_{n,t}$ for $1 \leq t \leq T$.
\end{thm}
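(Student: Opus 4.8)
The plan is to reuse the submodularity machinery of Theorem~\ref{p_increasing_b}, but to establish the \emph{opposite} order structure so that the monotone-comparative-statics argument yields a non-increasing rather than non-decreasing minimizer. Fixing $\v{b}_t$ and $\left\lbrace h_{m,t}, m\in[N]\backslash n\right\rbrace$, I reuse the function $Z_{n,t}(h_{n,t},p_{n,t})$ from \eqref{sub_fucntion_z}, now viewed as a function of $(h_{n,t},p_{n,t})$. By the standard monotone-optimal-solution result \cite{puterman2014markov}, it suffices to show that $Z_{n,t}$ is \emph{supermodular} in $(h_{n,t},p_{n,t})$, i.e.,
\begin{equation}
	Z_{n,t}(h^{+},p^{+})+Z_{n,t}(h^{-},p^{-})\geq Z_{n,t}(h^{+},p^{-})+Z_{n,t}(h^{-},p^{+})
\end{equation}
for $h^{+}\geq h^{-}$ and $p^{+}\geq p^{-}$; this is exactly the reverse of the submodularity inequality \eqref{submodular_equ} and forces the minimizing power to be non-increasing in $h_{n,t}$. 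The terminal slot $t=T$ is immediate from \eqref{bellman_equ_terminal}, since $q_{n,t}$ in \eqref{per} is decreasing in $h_{n,t}$, so I concentrate on $1\leq t\leq T-1$.

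As a prerequisite I would first prove, by backward induction on $t$, an auxiliary monotonicity lemma: for every fixed battery vector, $V_{t}(\v{h}_t,\v{b}_t)$ is non-increasing in each channel coordinate $h_{n,t}$. The base case holds because $V_T=\min_{\v{p}}c$ and $c$ is decreasing in each $h_{n,T}$ through \eqref{per}. For the inductive step, Assumption~\ref{channel_transmission_matrix_assumption} is the crucial ingredient: stochastic monotonicity of $\Psi_n$ means a larger $h_{n,t}$ induces a first-order stochastically larger next-slot channel $h_{n,t+1}$, so that the expectation of the (inductively) non-increasing $V_{t+1}$ stays non-increasing in $h_{n,t}$; moreover, a larger $h_{n,t}$ lowers $e_{n,t}^{\text{com}}=p_{n,t}s_n/r_{n,t}$ in \eqref{com_energy}, hence raises the next battery level through \eqref{evo_energy}, which by the same monotonicity can only decrease the cost-to-go.

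With the lemma established, I would split $Z_{n,t}$ and verify the supermodular cross-difference of each term. For the future-cost term the argument parallels Theorem~\ref{p_increasing_b}: I track how $h_{n,t}$ simultaneously shifts the channel kernel $\b{P}(h_{n,t+1}|h_{n,t})$ (handled by Assumption~\ref{channel_transmission_matrix_assumption}) and the post-decision battery $x_{n,t}=b_{n,t}-e_{n,t}$ (handled by the convexity of $V_{t+1}$ from Lemma~\ref{V_convex}), the new wrinkle being that here $e_{n,t}$ depends on $h_{n,t}$ as well as on $p_{n,t}$.

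The main obstacle is the one-step cost term $c_n(h_{n,t},p_{n,t})\propto q_{n,t}=Q(h_{n,t}p_{n,t})$ with $Q(z)=1-\exp(-\sigma N_0 W/z)$. Its mixed difference is governed by the sign of $\frac{d}{dz}\bigl(zQ'(z)\bigr)\propto\bigl(1-\sigma N_0 W/z\bigr)$, which is non-negative (supermodular) only when $h_{n,t}p_{n,t}>\sigma N_0 W$ and negative when $h_{n,t}p_{n,t}<\sigma N_0 W$; thus the waterfall packet-error curve is \emph{not} globally supermodular, and below the waterfall threshold a better channel can actually reward a higher reliability target and hence more power. Closing this gap is the crux: one either invokes an operating-regime condition that keeps the active products $h_{n,t}p_{n,t}$ above $\sigma N_0 W$, or shows that the supermodular surplus of the future-cost term (where both the improved channel kernel and the energy saving push toward less power) dominates the cost term's local deficit. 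I expect this interaction between the non-monotone cross-curvature of the waterfall cost and the channel dependence of the energy consumption to be the hardest part of the argument.
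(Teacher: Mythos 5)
Your proposal is not a complete proof, and you have in fact put your finger on exactly why the route you chose cannot be completed: the one-step cost $c_n(h_{n,t},p_{n,t})\propto 1-\exp(-\sigma N_0 W/(p_{n,t}h_{n,t}))$ is not globally supermodular in $(h_{n,t},p_{n,t})$ --- its mixed difference changes sign at $h_{n,t}p_{n,t}=\sigma N_0 W$, as you computed --- so the Topkis-style argument mirroring Theorem~\ref{p_increasing_b} (supermodularity of $Z_{n,t}$ in $(h_{n,t},p_{n,t})$) already fails on the one-step term, before the future-cost term is even reached. You leave this as ``the crux,'' which means the proof is missing its key idea rather than a routine verification.

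The paper closes this gap by abandoning the supermodularity framework for the channel variable and instead invoking the interval-dominance sufficient conditions of \cite[Theorem 1]{krishnamurthy2023interval}, which are imposed on the MDP primitives only: (1) monotonicity of the one-step cost in $h_{n,t}$; (2) stochastic monotonicity of the tail sums $\sum_{\{h_{n,t+1}\geq\tilde h\}}\b{P}(h_{n,t+1}|h_{n,t})$ in $h_{n,t}$, which is exactly Assumption~\ref{channel_transmission_matrix_assumption}; (4) submodularity of those tail sums in $(h_{n,t},p_{n,t})$, which is trivial because the factorization \eqref{trans_prob} makes them independent of $p_{n,t}$ once the battery coordinate is marginalized out --- so the interaction you worried about between $h_{n,t}$, $e_{n,t}^{\text{com}}$, and the next battery state never enters; and, crucially, (3) a \emph{scaled single-crossing} condition on the cost, namely the existence of $\gamma_{\v{h}_t^-,\v{h}_t^+,\v{p}_t}>0$ non-decreasing in $\v{p}_t$ with $c(\v{h}_t^+,\v{p}_{n,t}^+)-c(\v{h}_t^+,\v{p}_t)\leq\gamma_{\v{h}_t^-,\v{h}_t^+,\v{p}_t}\,[c(\v{h}_t^-,\v{p}_{n,t}^+)-c(\v{h}_t^-,\v{p}_t)]$, which is strictly weaker than supermodularity. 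The paper satisfies (3) by choosing $\gamma_{\v{h}_t^-,\v{h}_t^+,\v{p}_t}=\frac{c(\v{h}_t^+,\v{p}_t)-c(\v{h}_t^+,\v{p}_{n,t}^+)}{c(\v{h}_t^-,\v{p}_t)}$, which reduces the inequality to non-negativity of the cost and thereby sidesteps the sign change you identified. In short: your diagnosis of the obstacle is accurate and shows good judgment, your auxiliary lemma ($V_t$ non-increasing in the channel) and the convexity of $V_{t+1}$ turn out to be unnecessary on the paper's route, and the missing ingredient is the replacement of cost supermodularity by the single-crossing condition of interval dominance.
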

\begin{proof}
	{The proof is presented in Appendix D.}
\end{proof}

\begin{figure}[tbp]
	\renewcommand\figurename{\small Fig.}
	\centering \vspace*{3pt} \setlength{\baselineskip}{10pt}
	\includegraphics[width = 3.15in,center]{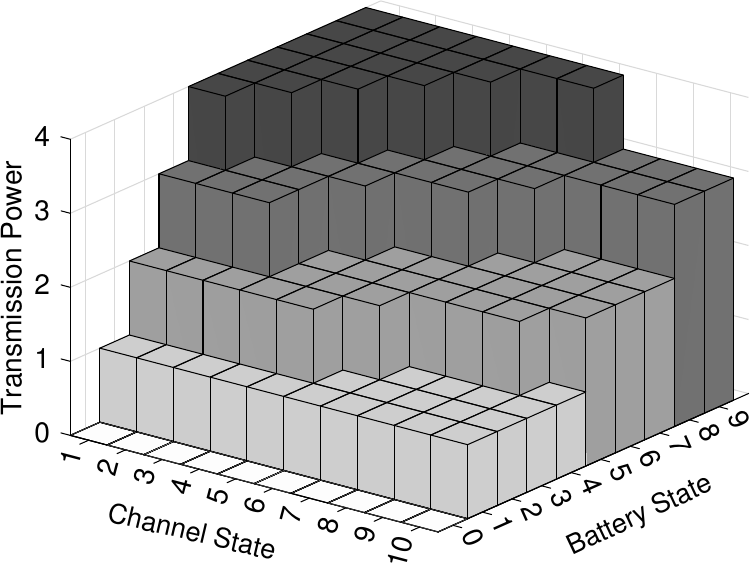}
	\vspace{-8pt}
	\caption{An illustration for the monotone structure in the optimal policy of one typical device.}\label{threshold_fig}
	\vspace{-3pt}
\end{figure}

Fig. \ref{threshold_fig} demonstrates the monotone structure for the optimal policy of one typical device.
The benefits for the monotone structure are twofold. First, it reduces the storage space for online implementation, as only the boundary states used for switching transmission power need to be stored. Second, the monotone structure can reduce the computational overhead for solving the Bellman equation.
According to the monotone structure, the standard value iteration algorithm can be modified as follows.
At time slot $t$, within the defined state space $[\v{h}_{\textup{min}},\v{h}_{\textup{max}}]\times [\v{b}_{\textup{min}},\v{b}_{\textup{max}}]$, we first compute the optimal actions at the corners, i.e., $\v{p}_{t}^*(\v{h}_{\textup{min}},\v{b}_{\textup{max}})$ and $\v{p}_{t}^*(\v{h}_{\textup{max}},\v{b}_{\textup{min}})$, according to the Bellman equation. Then, we check whether there exists an action $\v{p}^\#_t$ such that
\begin{equation}\label{modified_Bellman_equation}
	\begin{aligned}
		\v{p}^{\#}_t = \v{p}_{t}^*(\v{h}_{\textup{min}},\v{b}_{\textup{max}}) = \v{p}_{t}^*(\v{h}_{\textup{max}},\v{b}_{\textup{min}}).
	\end{aligned}
\end{equation}
If such an action $\v{p}^\#_t$ exists, it can be determined as the optimal action for the entire region, and we update the value function for this region as follows:
\begin{equation}\label{bellman_equ_structure}
	\begin{aligned}
		V_{t}(\v{h}_t,\v{b}_t)=c(\v{h}_t,\v{p}^\#_t)+\b{E}\!\left[V_{t+1}(\v{h}_{t+1},\v{b}_{t+1})|\v{h}_t,\v{b}_t,\v{p}^\#_t\right],\\
		\forall \v{h}_t \in [\v{h}_{\textup{min}},\v{h}_{\textup{max}}], \v{b}_t\in [\v{b}_{\textup{min}},\v{b}_{\textup{max}}].
	\end{aligned}    
\end{equation}
If such a $\v{p}^\#_t$ does not exist, we calculate the midpoints $\v{h}_{\textup{mid}} = \floor{\frac{\v{h}_{\textup{min}}+\v{h}_{\textup{max}}}{2}}$ and $\v{b}_{\textup{mid}} = \floor{\frac{\v{b}_{\textup{min}}+\v{b}_{\textup{max}}}{2}}$, and we split the current region into four quadrants: $[\v{h}_{\textup{min}},\v{h}_{\textup{mid}}]\times [\v{b}_{\textup{min}},\v{b}_{\textup{mid}}]$, $[\v{h}_{\textup{min}},\v{h}_{\textup{mid}}]\times [\v{b}_{\textup{mid}},\v{b}_{\textup{max}}]$, $[\v{h}_{\textup{mid}},\v{h}_{\textup{max}}]\times [\v{b}_{\textup{min}},\v{b}_{\textup{mid}}]$, and $[\v{h}_{\textup{mid}},\v{h}_{\textup{max}}]\times [\v{b}_{\textup{mid}},\v{b}_{\textup{max}}]$. Then, we apply the same checking and subdivision process recursively to each of these subregions. The recursion terminates when the optimal actions for all states are determined. We note that equation \eqref{bellman_equ_structure} avoids the brute-force search over the action space $\ca{P}$, thereby significantly reducing the computational complexity. According to Theorems \ref{p_increasing_b} and \ref{p_decreasing_h}, the modified algorithm converges to the optimal policy as the standard value iteration algorithm does.

Although the monotone structure can reduce the computational complexity, computing the optimal transmission policy of multiple devices still remains challenging.
Specifically, the computational complexity for deriving the optimal policy is $\ca{O}(|\ca{S}|^2|\ca{P}|)$ \cite{balaji2018complexity}, where both the state space $\ca{S}$ and the action space $\ca{P}$ grow exponentially with the number of devices $N$. Thus, the computational complexity also increases exponentially in $N$. To address the curse of dimensionality, in Section \ref{asy_opt}, we trade off optimality for complexity and develop an asymptotically optimal policy, the computational complexity of which increases only linearly with $N$.

\section{Asymptotically Optimal Policy: Relax-and-Truncate Approach}\label{asy_opt}

In this section, we first develop a low-complexity algorithm, termed the relax-and-truncate approach, whose computational complexity increases only linearly with the number of devices, $N$. Then, we prove that the proposed algorithm is asymptotically optimal as $N$ increases.

\subsection{Device Level Decomposition}

We start by observing that Problem \ref{Problem1} is intractable due to constraint C3, which couples the transmission power actions $\left\lbrace p_{n,t},n\in [N]\right\rbrace $, resulting in the exponential complexity.
To address this issue, we relax the hard constraint C3 into a time-average constraint. First, we define the time-average number of scheduled devices under a policy $\pi$ as
\begin{equation}\label{def_R_lambda}
	\begin{aligned}
		R(\pi)=&\b{E}_{\pi}\left[ \frac{1}{T}\sum_{t=1}^{T}\sum_{n=1}^{N} \mathbbm{1}_{\left\lbrace p_{n,t}> 0\right\rbrace }\right].\\
	\end{aligned}    
\end{equation}
Then, we formulate the relaxed problem as follows.
\begin{problem}\label{Problem2}
	\textup{(Relaxed Optimality Gap Minimization Problem)}
	\begin{equation}\label{P2}
		\begin{aligned}
			\overline \pi^*=\underset{\pi}{\textup{argmin}}~~&J(\pi)\\
			\textup{s.t.} ~~~~ &\textup{C2},\\
			&\overline{\textup{C3}}:R(\pi) \leq R.
		\end{aligned}    
	\end{equation}
\end{problem}
The relaxed bandwidth constraint $\overline{\textup{C3}}$ allows more than $R$ devices to upload local gradients at certain time slots, provided that the time-average bandwidth constraint is satisfied over all time slots.
The optimal policy for Problem \ref{Problem2} is referred to as the optimal relaxed policy $\overline \pi^*$ hereinafter.

The relaxed constraint $\overline{\textup{C3}}$ in Problem \ref{Problem2} still couples the power actions of all devices. To tackle this issue, we adopt a device level decomposition by using the Lagrange approach \cite{altman2021constrained}.
Specifically, we introduce a Lagrange multiplier $\lambda\geq0$ and define the Lagrangian associated with Problem \ref{Problem2} as
\begin{equation}\label{def_lagrangian}
	\begin{aligned}
		&\ca{L}(\pi,\lambda)\\
		=&\b{E}_{\pi}\left[ \sum_{t=1}^{T} \left( c(\v{h}_{t},\v{p}_{t})  +\lambda\left(\sum_{n=1}^{N}\mathbbm{1}_{\left\lbrace p_{n,t}> 0\right\rbrace }-R\right)\right) \right].\\
	\end{aligned}    
\end{equation}
The Lagrange multiplier $\lambda$ can be interpreted as the cost of scheduling one device at a time slot. For a given $\lambda$, we define the Lagrange dual function as
\begin{equation}\label{def_lagrangian_dual}
	\begin{aligned}
		\ca{L}^*(\lambda)=\min_{\pi}~&\ca{L}(\pi,\lambda)\\
		\text{s.t.} ~~ & \textup{C2}.
	\end{aligned}    
\end{equation}
A policy that achieves $\ca{L}^*(\lambda)$ is called the $\lambda$-optimal policy $\overline \pi_{\lambda}$.

Note that the dimension of the state space $\ca{S}$ is finite and the one-step cost is bounded below, i.e., $c(\v{h},\v{p})\geq 0,\forall \v{h},\v{p}$. Under these conditions, there exists an optimal Lagrange multiplier $\lambda^*$ such that $J(\overline \pi^*)=\ca{L}^*(\lambda^*)$ and $\lambda^*\left( R(\overline \pi_{\lambda^*})-R\right)=0$ \cite[Theorem 12.8]{altman2021constrained}.
Thus, the optimal relaxed policy $\overline \pi^*$ can be computed by using a two-stage iterative algorithm: 1) For a given $\lambda$, we find the $\lambda$-optimal policy; 2) we update $\lambda$ by applying a bisection method. These two steps are detailed in the following.

\subsubsection{Optimal Policy for a Fixed Lagrange Multiplier}
Given a fixed $\lambda$, the problem of finding the $\lambda$-optimal policy $\overline \pi_{\lambda}$ is separable across devices and can be decoupled into $N$ per-device subproblems. Specifically, the Lagrangian in \eqref{def_lagrangian} can be expressed equivalently as $\ca{L}(\pi,\lambda)=\sum_{n=1}^{N}\ca{L}_n(\pi_n,\lambda)$, where
\begin{equation}\label{def_lagrangian_n}
	\begin{aligned}
		&\ca{L}_n(\pi_n,\lambda)\\
		=&\b{E}_{\pi_n \!\!}\left[ \sum_{t=1}^{T}\left( c_n({h}_{n,t},{p}_{n,t})\!+\!\lambda \mathbbm{1}_{\left\lbrace p_{n,t}> 0\right\rbrace }\!-\!\frac{\lambda R}{N} \right) \!\right]\!,
	\end{aligned}    
\end{equation}
and $\pi_n=\{\pi_{n,t}:\ca{S}_n \rightarrow \ca{P}_n\}_{t=1}^T$ is the policy of device $n$. Then, the problem of finding $\overline \pi_{\lambda}$ reduces to finding $N$ per-device $\lambda$-optimal policies $\overline \pi_{n,\lambda}$ as follows.

\begin{problem}\label{Problem3}
	(Decoupled Optimality Gap Minimization)
	\begin{equation}\label{P3}
		\begin{aligned}
			\overline \pi_{n,\lambda}=\underset{\pi_n}{\textup{argmin}}~~&\ca{L}_n(\pi_n,\lambda)\\
			\textup{s.t.} ~~~~ &\textup{C2}.
		\end{aligned}    
	\end{equation}
\end{problem}
The decoupled Problem \ref{Problem3} for each device $n$ can be modeled as an MDP and solved by value iteration or policy iteration algorithms.

\subsubsection{Determination of the Optimal Lagrange Multiplier}

According to \cite[Lemma 3.1]{beutler1985optimal}, $R(\overline \pi_{\lambda})$ decreases as $\lambda$ increases. Intuitively, the Lagrange multiplier $\lambda$ can be interpreted as the cost of scheduling one device; thus, increasing $\lambda$ imposes a greater penalty on the scheduling cost in the Lagrangian.
Our goal is to identify the smallest value of $\lambda$ that satisfies the relaxed bandwidth constraint $\overline{\textup{C3}}$.
%
Formally, the optimal Lagrange multiplier is defined as
\begin{equation}\label{opt_lamda}
	\begin{aligned}
		\lambda^*=\inf \left\lbrace\lambda\geq 0:R(\overline \pi_{\lambda}) \leq R \right\rbrace. 
	\end{aligned}  
\end{equation} 
To search for the optimal Lagrange multiplier $\lambda^*$, we apply the bisection method, which exploits the monotonicity of $R(\overline \pi_{\lambda})$ with respect to $\lambda$. We initialize $\lambda^-=0$ and $\lambda^+$ as a sufficiently large number such that $R(\overline \pi_{\lambda^+})< R$. The algorithm terminates when $\lambda^+ - \lambda^-<\epsilon$, where $\epsilon$ is a sufficiently small constant.
We summarize the complete procedure for deriving the optimal relaxed policy $\overline \pi^*$ in Algorithm \ref{algorithm1}.

\begin{rem}
	By employing the Lagrange approach, solving the relaxed Problem \ref{Problem2} is reduced to solving $N$ subproblems (i.e. Problem \ref{Problem3}). It is worth noting that, in this manner, the computational complexity of finding the optimal relaxed policy $\overline \pi^*$ increases only linearly with the number of devices $N$, whereas the computational complexity of computing the optimal policy $\pi^*$ grows exponentially with $N$.
\end{rem}

\subsection{Truncation Policy With Hard Bandwidth Constraint}
Note that the hard constraint C3 is not guaranteed to be satisfied under the optimal relaxed policy $\overline \pi^*$. To address this issue, we construct a new truncation policy $\tilde{\pi}$ based on $\overline \pi^*$ to meet the hard constraint C3.
Denote $\overline \Omega_t$ as the set of scheduled devices at time slot $t$ under the optimal relaxed policy $\overline \pi^*$.
The truncation policy $\tilde{\pi}$ at each time slot is carried out in two cases as follows:
\begin{itemize}
	\item If $|\overline \Omega_t|\leq R$, all devices $n\in \overline \Omega_t$ can upload their local gradients. 
	\item Otherwise if $|\overline \Omega_t| > R$, the server schedules a subset of devices $\tilde \Omega_t\subseteq\overline \Omega_t$ \emph{randomly} such that $|\tilde \Omega_t|=R$. Those devices that are in set $\overline \Omega_t$ but not selected in $\tilde \Omega_t$ will not be scheduled to transmit.
	
\end{itemize}

\begin{algorithm}[tbp]
	\caption{Optimal Relaxed Policy via Value Iteration and Bisection Search} \label{algorithm1}
	\textbf{Initialize:} Set $\lambda^-=0$, $\lambda^+$ as a sufficiently large number, and set a small termination criterion $\epsilon>0$\;
	\Repeat{$\lambda^+ - \lambda^-\leq \epsilon$}
	{
		\tcp{Computing the $\lambda$-optimal policy}
		Set $\lambda=\frac{\lambda^-+\lambda^+}{2}$\;
		\For{{\rm \textbf{all devices:}} $n=1,\ldots,N$ {\rm \textbf{in parallel}}} 
		{
			Compute $\overline \pi_{n,\lambda}$ by solving Problem \ref{Problem3} through value iteration algorithm\;
		}
		\tcp{Update the multiplier $\lambda$}
		Compute $R(\overline \pi_{\lambda})$ according to \eqref{def_R_lambda}\;
		\eIf{$R(\overline \pi_{\lambda})\geq R$}{Set $\lambda^-=\lambda$;}{Set $\lambda^+=\lambda$;}
	}
\end{algorithm}

\subsection{Asymptotic Optimality of the Relax-and-Truncate Policy}

In this subsection, we analyze the asymptotic optimality of the proposed relax-and-truncate policy.
We establish an upper bound for the gap between the expected cumulative costs under the truncation policy $\tilde{\pi}$ and the optimal policy $\pi^{*}$ in the following theorem.

\begin{thm}\label{asymptotically_optimal}
	Assume that the proportion of scheduled devices, $\kappa = \frac{R}{N}$, remains constant. The gap between the expected cumulative costs under the truncation policy $\tilde{\pi}$ and the optimal policy $\pi^{*}$ is upper bounded by
	\begin{equation}\label{bound_J_thm}
		\begin{aligned}
			J(\tilde \pi)-J(\pi^*)\leq \frac{ 2 L G^2  \sqrt{T} }{\kappa  \sqrt{N K}  } =\ca{O}\left(\frac{1}{\sqrt{N}}\right).
		\end{aligned}
	\end{equation}
\end{thm}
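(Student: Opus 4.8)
The plan is to sandwich $J(\tilde\pi)$ between $J(\pi^*)$ and a quantity controlled entirely by the relaxed policy, exploiting the fact that Problem~\ref{Problem2} is a genuine relaxation of Problem~\ref{Problem1}. First I would observe that replacing the hard per-slot constraint C3 by the time-average constraint $\overline{\text{C3}}$ only enlarges the set of admissible policies; since $\pi^*$ is feasible for the relaxed problem, optimality of $\overline\pi^*$ gives $J(\overline\pi^*)\le J(\pi^*)$. Consequently it suffices to bound $J(\tilde\pi)-J(\overline\pi^*)$, i.e. the price paid for re-imposing the hard bandwidth constraint by truncation, so the whole theorem reduces to estimating how much the random truncation inflates the cumulative cost.

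Next I would bound the truncation penalty slot by slot. By construction $\tilde\pi$ coincides with $\overline\pi^*$ except at slots with $|\overline\Omega_t|>R$, where $(|\overline\Omega_t|-R)$ scheduled devices are forced to stay idle ($p_{n,t}=0$). An idle device has packet error rate $q_{n,t}(h_{n,t},0)=1$, so its one-step cost jumps to the maximal value $c_n=\frac{2LG^2}{N\sqrt{KT}}$; hence the immediate cost increase at slot $t$ is at most $\frac{2LG^2}{N\sqrt{KT}}(|\overline\Omega_t|-R)^+$. A dropped device also retains the energy it would have spent, so along the coupled trajectory its battery under $\tilde\pi$ stays pointwise above its battery under $\overline\pi^*$; invoking the convexity of $V_t$ in the battery state from Lemma~\ref{V_convex} (equivalently, that the cost-to-go is non-increasing in $\v{b}_t$), this extra energy never increases future cost. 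Summing the per-slot increases then yields $J(\tilde\pi)-J(\overline\pi^*)\le \frac{2LG^2}{N\sqrt{KT}}\,\mathbb{E}\big[\sum_{t=1}^{T}(|\overline\Omega_t|-R)^+\big]$.

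The crux is to show $\mathbb{E}[(|\overline\Omega_t|-R)^+]=\mathcal{O}(\sqrt N)$. The Lagrangian decomposition used to compute $\overline\pi^*$ makes the devices evolve as $N$ independent per-device chains, so $|\overline\Omega_t|=\sum_{n=1}^N \mathbbm{1}_{\{p_{n,t}>0\}}$ is a sum of independent Bernoulli variables, whence $\mathrm{Var}(|\overline\Omega_t|)\le N/4$. Because $\overline{\text{C3}}$ pins the mean number of scheduled devices to the budget $R$, the positive part $(|\overline\Omega_t|-R)^+$ is controlled by the fluctuation of $|\overline\Omega_t|$ about its mean, and I would estimate $\mathbb{E}[(|\overline\Omega_t|-R)^+]\le \tfrac12\sqrt{\mathrm{Var}(|\overline\Omega_t|)}=\mathcal{O}(\sqrt N)$ via the identity $\mathbb{E}[(X-\mathbb{E}X)^+]=\tfrac12\mathbb{E}|X-\mathbb{E}X|\le\tfrac12\sqrt{\mathrm{Var}(X)}$. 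Summing over the $T$ slots and substituting $R=\kappa N$ collapses $\frac{2LG^2}{N\sqrt{KT}}\cdot \mathcal{O}(\sqrt N\,T)$ into the advertised $\mathcal{O}(\sqrt{T}/\sqrt{NK})=\mathcal{O}(1/\sqrt N)$ scaling, matching the stated bound $\frac{2LG^2\sqrt T}{\kappa\sqrt{NK}}$.

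I expect the main obstacle to be this excess-over-threshold estimate: the constraint $\overline{\text{C3}}$ only restricts the time average $\frac1T\sum_t\mathbb{E}|\overline\Omega_t|$, so making the per-slot bound $\mathbb{E}[(|\overline\Omega_t|-R)^+]=\mathcal{O}(\sqrt N)$ rigorous requires either arguing that the optimal relaxed policy keeps each per-slot mean at or below $R$, or isolating the slots with $\mathbb{E}|\overline\Omega_t|>R$ and absorbing their surplus through the time-average budget. Recovering the clean $1/\kappa$ constant, rather than a $\sqrt\kappa$ or $(1-\kappa)$ factor, hinges on precisely how this surplus is charged against $R$. The coupling claim in the second step---that energy saved by truncated devices cannot raise the cost-to-go---is the other delicate point, and is exactly where the battery-monotonicity established earlier in Lemma~\ref{V_convex} and Theorem~\ref{p_increasing_b} becomes essential.
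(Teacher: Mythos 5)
Your proposal follows essentially the same route as the paper's proof: bound $J(\tilde\pi)-J(\pi^*)$ by $J(\tilde\pi)-J(\overline\pi^*)$ via the relaxation, charge each slot the maximal per-device cost $\tfrac{2LG^2}{N\sqrt{KT}}$ times the expected excess $(|\overline\Omega_t|-R)^+$, and control that excess by the mean absolute deviation of the sum of independent Bernoulli scheduling indicators, which is at most $\sqrt{N}$. The two delicate points you flag --- that the per-slot mean $\b{E}[|\overline\Omega_t|]\leq R$ (not just its time average) and that truncation does not inflate future costs along the perturbed battery trajectory --- are exactly the steps the paper asserts without further elaboration, so your treatment is, if anything, slightly more careful than the published argument.
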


\begin{proof}
	{The proof is presented in Appendix E.}
\end{proof}

\begin{rem}
	According to Theorem \ref{asymptotically_optimal}, when $\kappa=\frac{R}{N}$ is fixed, the cost gap between the proposed relax-and-truncate policy $\tilde \pi$ and the optimal policy $\pi^*$ is $\ca{O}\left(\frac{1}{\sqrt{N}}\right)$. As $N$ increases, the gap approaches zero. Thus, the proposed relax-and-truncate policy is asymptotically optimal for Problem \ref{Problem1} with hard bandwidth constraint.
\end{rem}

\section{Structure-Enhanced Deep Reinforcement Learning}

In this section, we address a more general scenario in which the statistical knowledge of channels and harvested energy is unknown. To this end, we develop a structure-enhanced Deep Q-Network (SE-DQN) algorithm that leverages the monotone structure of the optimal policy to improve offline training performance.

\subsection{Deep Reinforcement Learning}
Solving the MDP formulated in Section \ref{Sec_MDP} requires the statistical knowledge of channels and harvested energy. However, this information may be unknown a priori in practice (e.g., edge devices are deployed in unknown or time-varying environments).
To address this challenge, the DQN algorithm has been widely adopted \cite{mnih2015human}.
DQN is an extension of the Q-learning algorithm, which uses a state-action value function, known as the Q-function, to estimate the quality of state-action pairs.
In the context of the MDP defined in Problem \ref{Problem1}, the update rule for the Q-function in Q-learning is given by
\begin{equation}\label{Q_equ}
	\begin{aligned}
		&Q(\v{h}_t,\v{b}_t,\v{p}_t)=c(\v{h}_t,\v{p}_t)+\min_{\v{p}_{t+ 1}\in \ca{P}_{t+1}}   Q(\v{h}_{t+1},\v{b}_{t+1},\v{p}_{t+1}).\\
	\end{aligned}    
\end{equation}
To effectively handle large state and action spaces, DQN approximates the Q-function using a neural network with parameters $\v{\theta}$, denoted as $Q(\v{h}_t,\v{b}_t,\v{p}_t;\v{\theta})$.
To train the DQN, a dataset of experiences is first collected by interacting with the environment, with each experience containing the current state, the action taken, the resulting cost, and the next state.
Then, the parameters $\v{\theta}$ of DQN are optimized by minimizing the temporal difference (TD) error, which is defined as
\begin{equation}\label{TD_error}
	\begin{aligned}
		\textup{TD} \triangleq y_t-Q(\v{h}_{t},\v{b}_{t},\v{p}_t;\v{\theta}),
	\end{aligned}    
\end{equation}
where $y_t$ is the target value defined as
\begin{equation}\label{y}
	\begin{aligned}
		y_t = c(\v{h}_t,\v{p}_t) + \min_{\v{p}_{t+1}\in \ca{P}_{t+1}} Q(\v{h}_{t+1},\v{b}_{t+1},\v{p}_{t+1};\v{\theta}^{\prime}),
	\end{aligned}    
\end{equation}
with $\v{\theta}^{\prime}$ being the parameters of a target network that stabilizes the training process. By minimizing the TD error, the DQN aligns its estimations with both immediate and future costs, guiding it towards the optimal policy.

\subsection{Algorithm Development}

The standard DQN algorithm dose not utilize the structural properties of the policy. In this subsection, we develop an SE-DQN algorithm that leverages the monotone structure obtained in Theorems \ref{p_increasing_b} and \ref{p_decreasing_h} to improve offline training performance. Specifically, we propose 1) an SE action selection method based on the monotone structure to select reasonable actions; and 2) an SE loss function to penalize actions that do not follow the monotone structure.
We present the SE action selection method and the SE loss function in the following.

\subsubsection{SE Action Selection}

The monotone structure implies that the optimal actions of a state and its adjacent states are related. Thus, we can infer the optimal action for a given state according to the actions taken by adjacent states.
At each time slot $t$, given the observed state $(\v{h}_t,\v{b}_t)$, we first compute the actions of its adjacent states as follows:
\begin{equation}\label{neighbor_actions}
	\begin{aligned}
		{\v{p}}^+_t = \argmin_{\v{p}_t\in \ca{P}_{t}}Q(\v{h}^-_t,\v{b}_t^+,\v{p}_t;\v{\theta}), \\
		{\v{p}}^-_t = \argmin_{\v{p}_t\in \ca{P}_{t}}Q(\v{h}^+_t,\v{b}_t^-,\v{p}_t;\v{\theta}),
	\end{aligned}
\end{equation}
where $\v{h}^-_t = \v{h}_t - \v{1}$, $\v{h}^+_t = \v{h}_t + \v{1}$, $\v{b}^-_t = \v{b}_t - \v{1}$, $\v{b}^+_t = \v{b}_t + \v{1}$, and $\v{1}$ denotes the all-ones vector. Then, we check whether there exist any actions $\hat{\v{p}}_t \in \ca{P}_t$ satisfying
\begin{equation}\label{exist_in_neighbor_actions}
	\begin{aligned}
		{\v{p}}^-_t \leq \hat{\v{p}}_t \leq {\v{p}}^+_t.
	\end{aligned}
\end{equation}
If such actions $\hat{\v{p}}_t$, referred to as the SE actions, exist, one of them is randomly selected for execution.
Otherwise, if no SE action exists, we execute the greedy action $\tilde{\v{p}}_t$, which is generated according to the standard DQN algorithm as follows:
\begin{equation}\label{greedy_action}
	\begin{aligned}
		\tilde{\v{p}}_t = \argmin_{\v{p}_t\in \ca{P}_{t}}Q(\v{h}_t,\v{b}_t,\v{p}_t;\v{\theta}).
	\end{aligned}
\end{equation}

\begin{algorithm}[tbp]
	\caption{Structure-Enhanced DQN Algorithm} \label{algorithm2}
	\textbf{Initialize:} policy network with random weights $\v{\theta}$ and target network with weights $\v{\theta}^{\prime} = \v{\theta}$\;
	\For{$\textup{epoch} = 1, 2, \ldots, E$}
	{
		Randomly initialize the state $(\v{h}_{1},\v{b}_{1})$\;
		\For{$t = 1, 2, \ldots, T$}
		{
			Observe the current state $(\v{h}_{t},\v{b}_{t})$\;
			Compute the actions of the adjacent states according to \eqref{neighbor_actions}\;
			\eIf{\textup{the SE actions $\hat{\v{p}}_t$ satisfying \eqref{exist_in_neighbor_actions} exist}}{
				Execute a randomly selected SE action\;
			}
			{
				Execute the greedy action $\tilde{\v{p}}_t$ in \eqref{greedy_action}\;
			}
			
			Observe the cost $c(\v{h}_{t},\v{p}_{t})$ and the next state $(\v{h}_{t+1},\v{b}_{t+1})$\;
			Calculate the TD error and AD error\;
			Update $\v{\theta}$ by performing SGD on the SE loss function $L(\v{\theta})$\;
			Update the target network $\v{\theta}^{\prime} = \v{\theta}$\;
		}
	}
\end{algorithm}

\subsubsection{SE Loss Function}
Different from the standard DQN, we develop an action difference (AD) error, which measures the difference between the Q-values of the SE action $\hat{\v{p}}_t$ and the greedy action $\tilde{\v{p}}_t$ as follows:
\begin{equation}\label{AD_error}
	\begin{aligned}
		\textup{AD} \triangleq Q(\v{h}_{t},\v{b}_{t},\hat{\v{p}}_t;\v{\theta})-Q(\v{h}_{t},\v{b}_{t},\tilde{\v{p}}_t;\v{\theta}).
	\end{aligned}    
\end{equation}
Since the optimal policy has a monotone structure, the SE action $\hat{\v{p}}_t$ should be close to the greedy action $\tilde{\v{p}}_t$.
Thus, DQN should minimize the AD error. By combining the TD error and AD error, we define the SE loss function as
\begin{equation}\label{SE_loss}
	\begin{aligned}
		L(\v{\theta}) = \alpha\textup{TD}^2 + (1-\alpha)\textup{AD}^2,
	\end{aligned}    
\end{equation}
where $\alpha\in (0,1)$ is a hyperparameter to balance the importance of the TD error and AD error. To optimize $\v{\theta}$, we minimize the SE loss function $L(\v{\theta})$ by adopting stochastic gradient descent (SGD). The proposed SE-DQN algorithm is summarized in Algorithm \ref{algorithm2}. In Section \ref{Sim_SE_DQN}, the performance of the proposed SE-DQN algorithm is evaluated and compared with the standard DQN algorithm.

\section{Simulation Results}\label{sim_result}

In this section, we validate the theoretical results and evaluate the performance of the proposed algorithms through numerical experiments on real datasets.

\subsection{Simulation Setups}

\subsubsection{Learning Model Setting}
The proposed algorithms are evaluated by implementing the image classification tasks on the MNIST and CIFAR-10 datasets. For the MNIST dataset, we employ the regularized multinomial logistic regression model, with the loss function defined as follows:
\begin{equation}\label{logistic_loss}
	\begin{aligned}
		&f(\v{w};\v{x}_{n,i},y_{n,i})\\
		=&\!-\!\sum_{c=1}^C \mathbbm{1}_{\!\left\lbrace y_{n,i}=c\right\rbrace }\! \log\! \left(\!\frac{\exp\! \left(\boldsymbol{w}_c^{\s{T}} \boldsymbol{x}_{n,i}\right)}{\sum_{j=1}^C \exp \!\left(\boldsymbol{w}_j^{\s{T}} \boldsymbol{x}_{n,i}\right)}\!\right) \!+\! \frac{\alpha_r}{2}\sum_{c=1}^C\|\boldsymbol{w}_c \|^2,
	\end{aligned}    
\end{equation}
where $C$ represents the total number of label categories in the dataset, $\boldsymbol{w}_c$ denotes the model parameter vector associated with label category $c\in [C]$, and $\alpha_r=0.01$. For the CIFAR-10 dataset, we train a convolutional neural network (CNN) with the following structure: four $3 \times 3$ convolution layers each with 64 channels and followed by a $2 \times 2$ max pooling layer, a fully connected layer with 120 units, and finally a 10-unit softmax output layer.
Each convolution or fully connected layer is activated by a ReLU function.
Both the MNIST and CIFAR-10 datasets are divided into $N$ subsets, with each subset assigned to a distinct device as its local training dataset.
We investigate both i.i.d. and non-i.i.d. data distributions. For i.i.d. scenarios, we implement uniform random data distribution across devices. To simulate realistic heterogeneity in non-i.i.d. settings, we employ a Dirichlet distribution Dir(0.8) for both the FMNIST and CIFAR-10 datasets.
We set the learning rate $\eta$ as 0.01 for the MNIST dataset and 0.05 for the CIFAR-10 dataset.

\subsubsection{Communication Model Setting}
The wireless channels from edge devices to the server follow Rayleigh fading and can be modeled as finite-state Markov channels \cite{tan2000first}.
It is assumed that the current channel state either remains constant or transitions to adjacent states in the next time slot, which is suitable for slow fading channels.
The channel transition probability from state $H_i$ to state $H_{j}$ can be approximated as follows \cite{sadeghi2008finite}:
\begin{equation}\label{sim_channel_trans}
	\begin{aligned}
		&\b{P}(h_{n,t+1}=H_{j}|h_{n,t}=H_{i})\\
		&=\begin{cases}\frac{Z(H_{i+1})\tau}{\b{P}({H_{i}})} , & \text { if } j = i + 1, \\   \frac{Z(H_{i})\tau}{\b{P}({H_{i}})} , & \text { if } j = i - 1,\\ 1 -  \frac{Z(H_{i+1})\tau}{\b{P}({H_{i}})} - \frac{Z(H_{i})\tau}{\b{P}({H_{i}})} , & \text { if } j = i,
		\end{cases}
	\end{aligned}    
\end{equation}
where $Z(H_{i})$ is the level crossing rate of channel state $H_i$, $\b{P}({H_{i}})$ is the steady probability of channel state $H_i$, and $\tau$ is the duration of one time slot.

For the energy harvesting model, we employ a realistic data record of the irradiance (i.e., the intensity of the solar radiation in units $\mu$W/$\text{cm}^2$) for the month of June from 2008 to 2010, measured at a solar site in Elizabeth City State University \cite{Andreas1981}.
Since edge devices are small, the solar panel area is set as 1 $\text{cm}^2$.
The energy conversion efficiency is assumed to be $20\%$ and we assume that the battery state is randomly initialized.

\begin{figure}[tbp]
	\vspace{-5pt}
	\renewcommand\figurename{\small Fig.}
	\centering \vspace*{6pt} \setlength{\baselineskip}{10pt}
	\hspace{-14.5pt}
	\includegraphics[width = 0.464\textwidth,center,trim=0 10 0 0,clip]{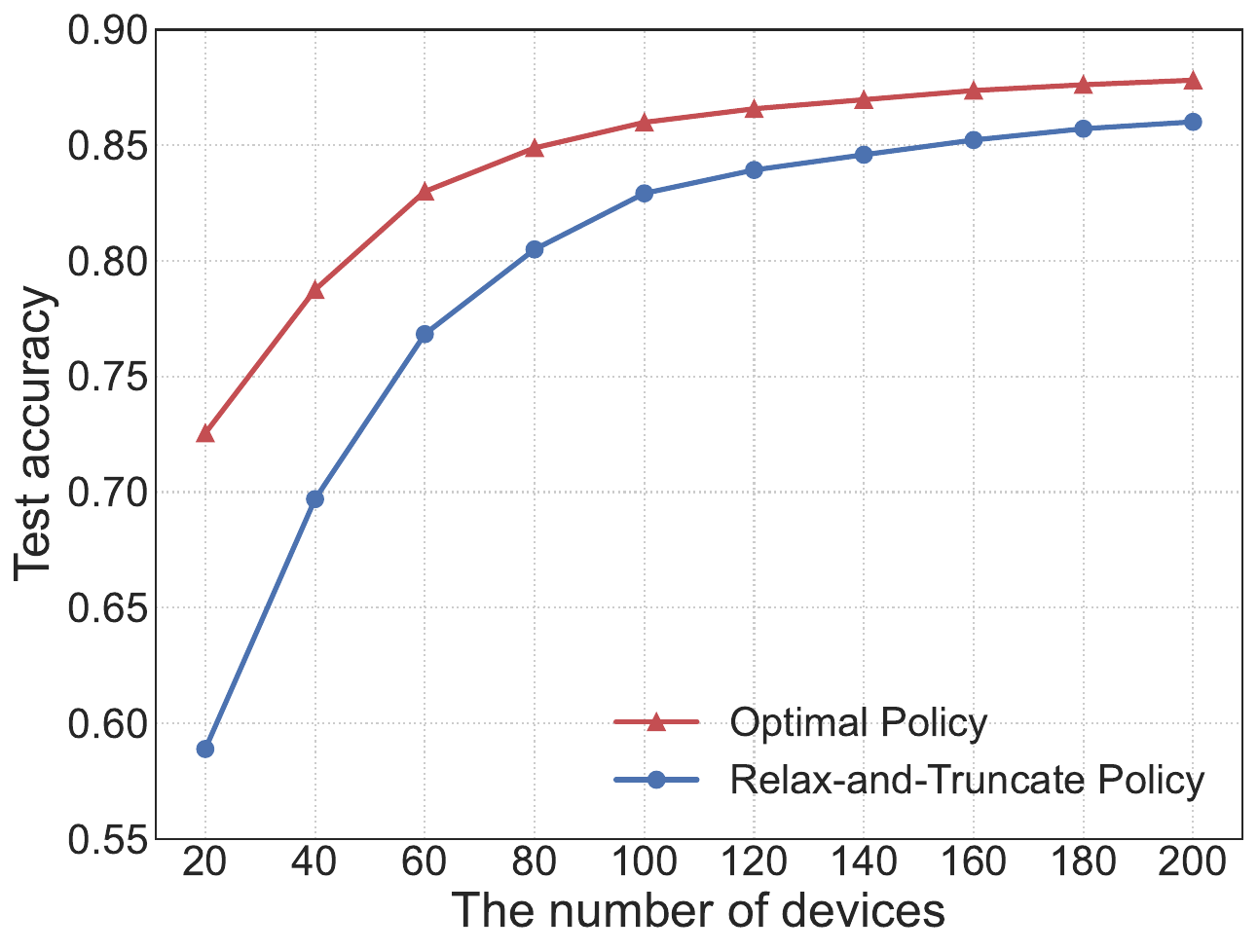}
	\caption{Performance comparison of the relax-and-truncate policy and the optimal policy with $R = 0.4N$.}\label{fig_jianjin}
	\vspace{-8pt}
\end{figure}

\begin{figure}[!t]
	\renewcommand\figurename{\small Fig.}
	\centering \vspace*{4pt} \setlength{\baselineskip}{10pt}
	\hspace{-4.5pt}
	\includegraphics[width = 0.467\textwidth,center,trim=0 7 0 0,clip]{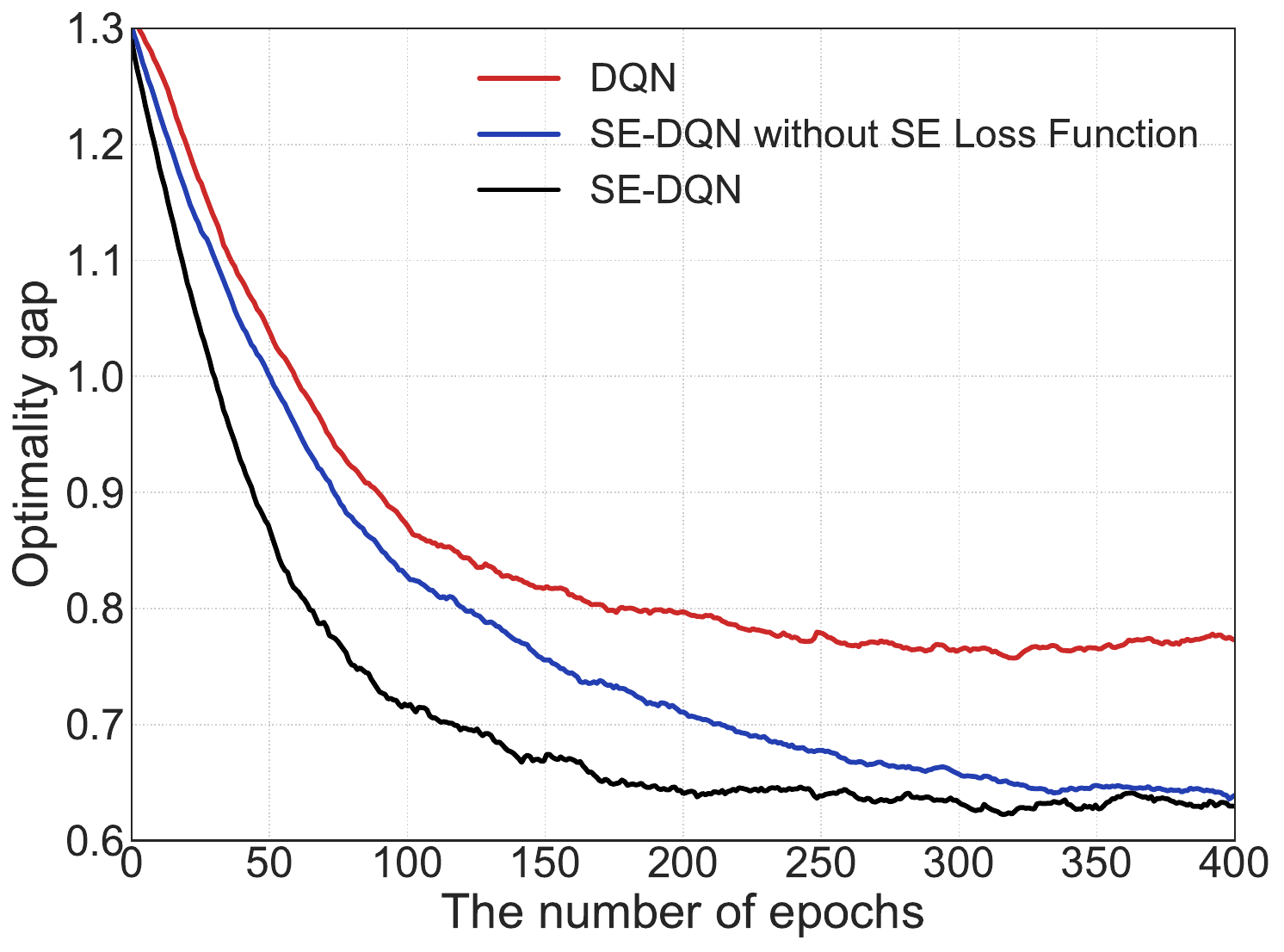}
	\vspace{-8pt}
	\caption{Offline training performance comparison of DQN, SE-DQN, and SE-DQN without SE loss function.}\label{fig_dqn}
	\vspace{-8pt}
\end{figure}

\begin{figure*}[!tbp]                
	\renewcommand\figurename{\small Fig.}  
	\centering
	\vspace{-10pt}
	\setlength{\abovecaptionskip}{8pt}
	\setlength{\belowcaptionskip}{5pt}
	
	\subfigure[%
	{\normalsize i.i.d.}%
	\label{fig:mnist}]{%
		\includegraphics[width=0.48\textwidth,trim=0 11 0 0,clip]{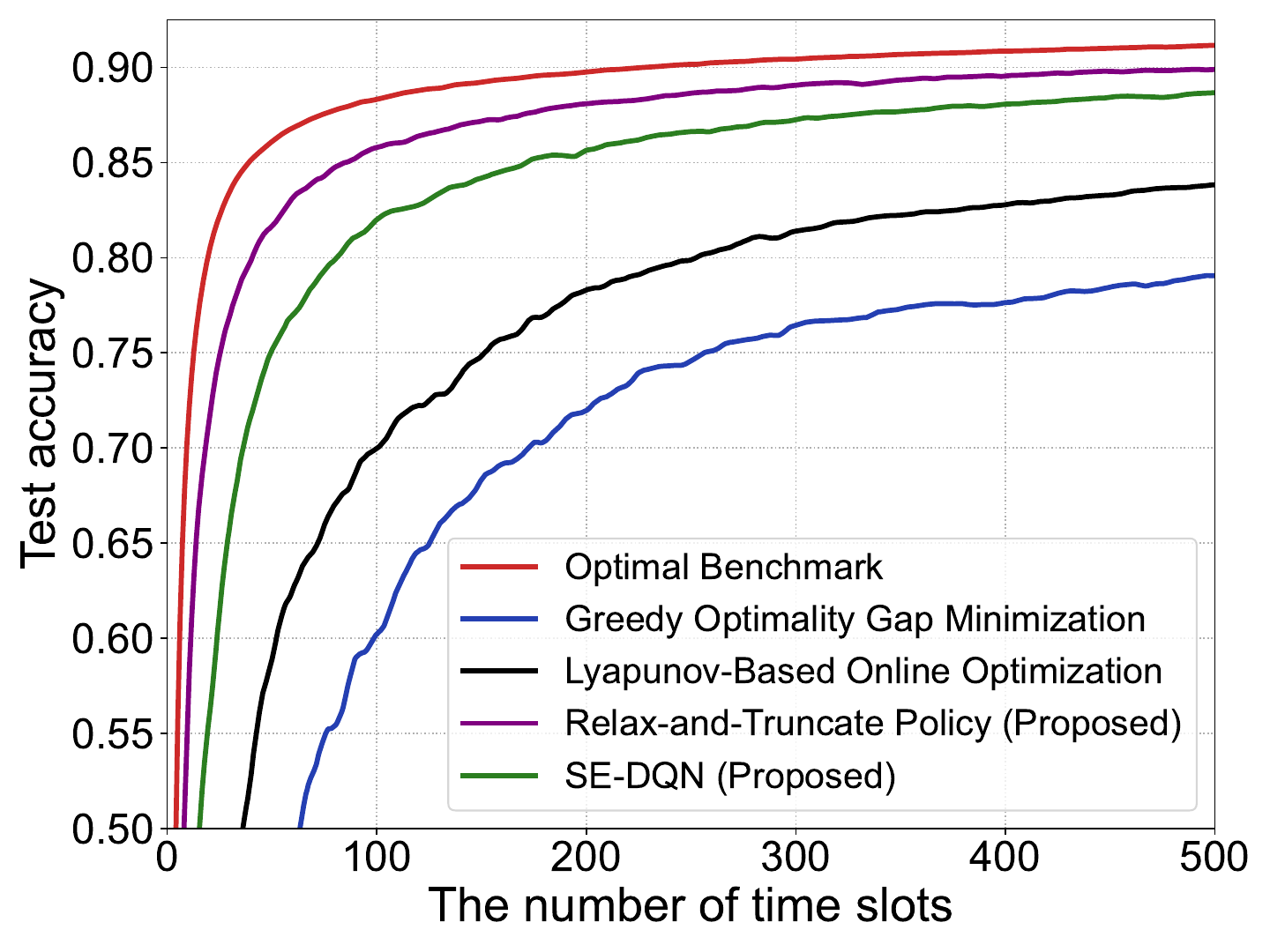}}
	\hfill
	\subfigure[%
	{\normalsize non-i.i.d.}%
	\label{fig:cifar}]{%
		\includegraphics[width=0.48\textwidth,trim=0 11 0 0,clip]{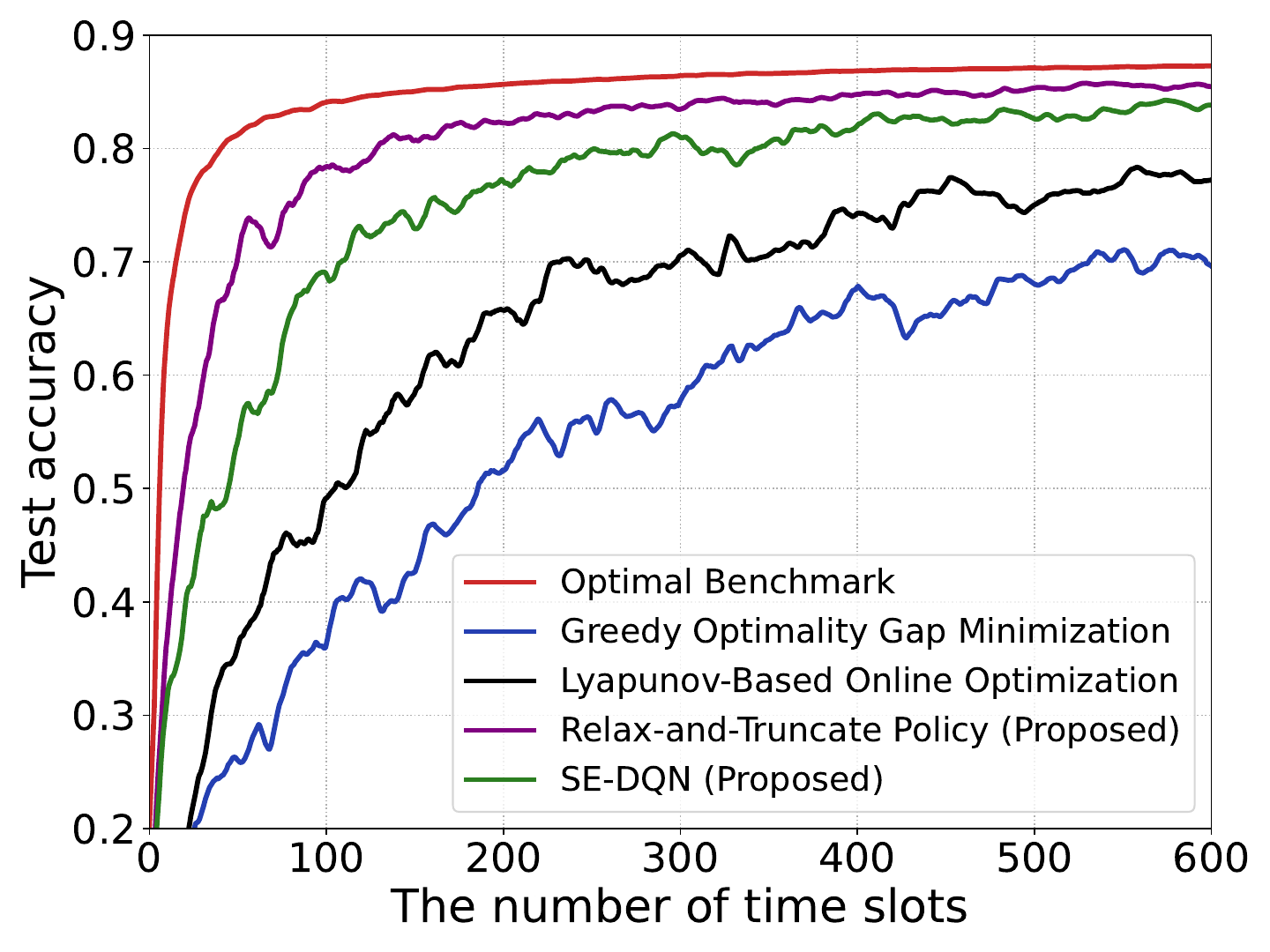}}%
	\vspace{-8pt}
	\caption{Test accuracy versus the number of time slots under different transmission strategies on the MNIST dataset: (a) i.i.d. data distribution and (b) non-i.i.d. data distribution.}
	\label{MNIST}
	\vspace{-10pt}
\end{figure*}

\subsection{Asymptotically Optimal Performance of the Relax-and-Truncate Policy}

In this subsection, we empirically validate the asymptotic optimality of the relax-and-truncate policy, as established in Theorem \ref{asymptotically_optimal}.
We conduct a classification task on the MNIST dataset to compare the learning performance of the relax-and-truncate policy with the optimal policy.
Fig. \ref{fig_jianjin} demonstrates the test accuracy for the number of devices $N$ ranging from 20 to 200 with a bandwidth of $R = 0.4N$. As observed, the performance gap between these two policies diminishes as $N$ increases, which corroborates our theoretical result.

\subsection{Offline Training Performance of the SE-DQN Algorithm}\label{Sim_SE_DQN}

In this subsection, we compare the offline training performance of the proposed SE-DQN algorithm with the standard DQN algorithm.
Moreover, to demonstrate the necessity of both SE action selection and SE loss function in the SE-DQN algorithm, we also compare the complete SE-DQN with a variant that disables the SE loss function (i.e., setting $\alpha=1$ in \eqref{SE_loss}).
The neural networks of DQN and SE-DQN have identical hyperparameters, each consisting of two fully connected layers with 256 units.
Fig. \ref{fig_dqn} illustrates the optimality gap (i.e., the objective value of Problem \ref{Problem1}) achieved by DQN, SE-DQN, and SE-DQN without the SE loss function. As observed, SE-DQN reduces the optimality gap by $25\%$ compared to the DQN algorithm. Furthermore, disabling the SE loss function doubles the training convergence time of the SE-DQN algorithm. Thus, both SE action selection and SE loss function in the SE-DQN algorithm are essential for guaranteeing its offline training performance.

\subsection{Comparison With Benchmark Schemes}

In this subsection, we demonstrate the superiority of the proposed algorithms to the following three benchmark schemes.

\begin{itemize}
	\item \textbf{Optimal Benchmark}: All devices have sufficient energy and bandwidth to upload local gradients without packet drops. This benchmark achieves the ideal performance by completely ignoring the negative effects of wireless channels. However, it is infeasible in practical wireless systems.
	\item \textbf{Greedy Optimality Gap Minimization}: At each time slot $t$, the server schedules all devices that have remaining battery energy.
	If the number of eligible devices exceeds $R$, the server selects the $R$ devices with best channel conditions. Each scheduled device depletes its remaining energy to upload the local gradient.

	\item \textbf{Lyapunov-Based Online Optimization}: This benchmark employs Lyapunov optimization framework for online optimization \cite{an_online_2023}. 
	At each time slot $t$, it adopts Gibbs sampling for device scheduling. Then, each scheduled device determines its transmission power by solving the per-time-slot subproblem as follows:
	\begin{equation}\label{online_opt}
		\begin{aligned}
			\underset{p_{n,t}}{\text{min}} ~~&\frac{ 2 V L G^2 q_{n,t}(h_{n,t},p_{n,t}) }{N \sqrt{KT}  }+ Q_{n,t}e_{n,t}\\
			\text{s.t.} \;\;\; & \;0\leq e_{n,t}\leq b_{n,t},
		\end{aligned}
	\end{equation}
	where $V>0$ is a tradeoff hyperparameter, and $Q_{n,t}$ is the virtual queue of device $n$. The queuing dynamics of $Q_{n,t}$ is given by $Q_{n,t+1}= Q_{n,t}+e_{n,t}-u_{n,t}.$
	
\end{itemize}


%

In Fig. \ref{MNIST}, we compare the learning performance of the proposed algorithms with benchmark schemes on the MNIST dataset under both i.i.d. and non-i.i.d. data distributions.
As observed, the proposed algorithms achieve near-optimal learning performance and outperform other benchmarks.
The greedy optimality gap minimization scheme myopically uses up all remaining energy at each time slot, leading to poor performance in subsequent time slots.
The Lyapunov-based online optimization scheme does not utilize the statistical knowledge of system dynamics, e.g., the distributions of the channel gains and the amount of harvested energy, resulting in poor long-term performance.
In contrast, the proposed algorithms leverage the previously known or learned knowledge of channel and energy dynamics to design the transmission policy from a long-term perspective, thereby improving the learning performance.

Fig. \ref{cifar} compares the proposed algorithms with benchmark methods on the CIFAR-10 dataset.
To illustrate the broad applicability of the proposed methods, we implement CNNs as the classifier models, which do not satisfy Assumption \ref{smoothness_assump}. Despite this relaxation, the empirical results exhibit consistent performance trends, with the proposed algorithms significantly outperforming other benchmarks and closely approaching the optimal benchmark. This highlights the robustness and effectiveness of the proposed approaches under various FL scenarios.

\begin{figure}[tbp]
	\hspace{0pt}
	\renewcommand\figurename{\small Fig.}
	\centering \vspace*{8pt} \setlength{\baselineskip}{10pt}
	\hspace{-5pt}
	\includegraphics[width = 0.485\textwidth,center,trim=0 0 0 0,clip]{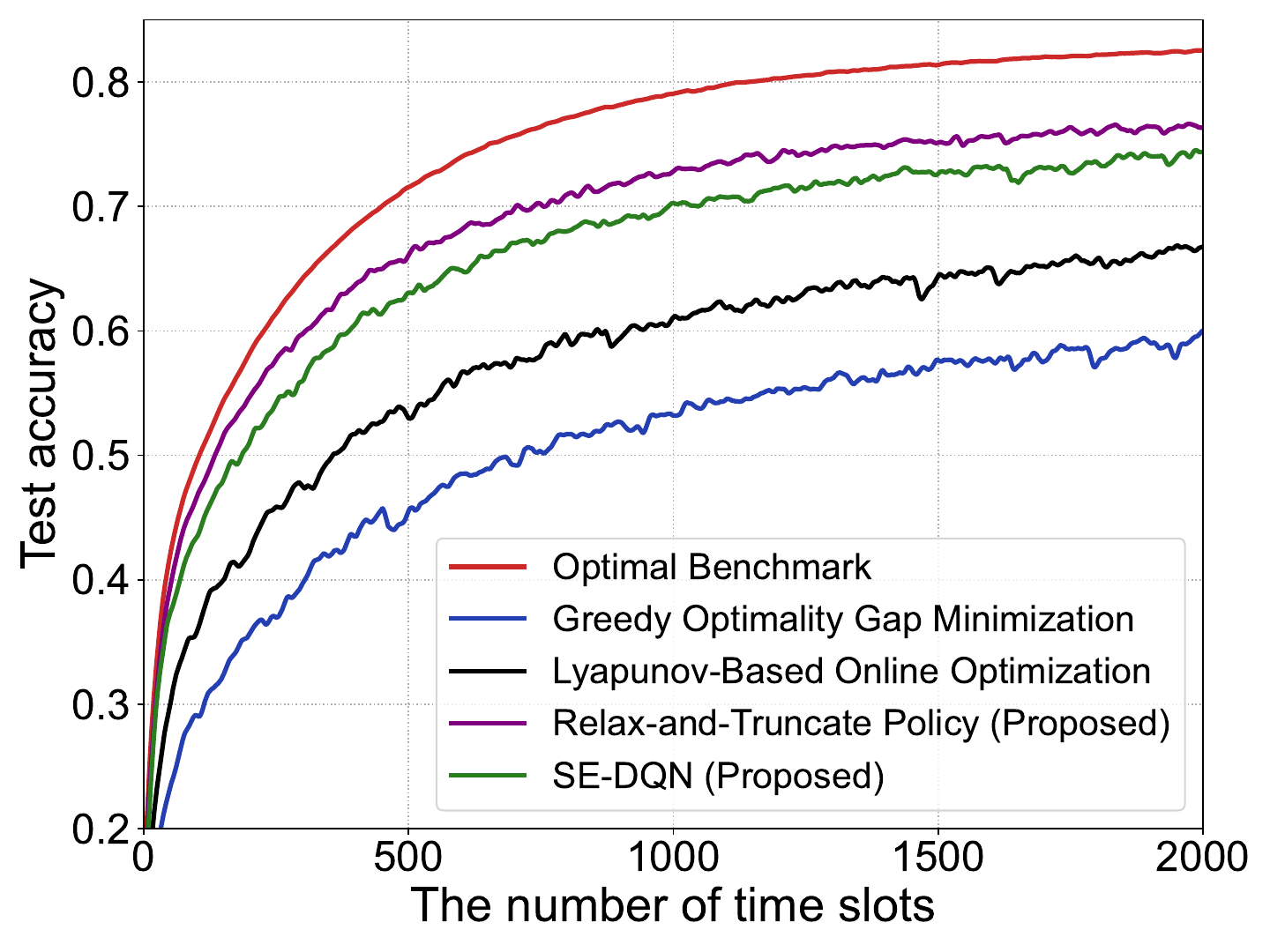}
	\vspace{-25pt}
	\caption{Test accuracy versus the number of time slots under different transmission strategies on the CIFAR-10 dataset.}\label{cifar}
	\vspace{-5pt}
\end{figure}

\begin{figure}[tbp]
	\renewcommand\figurename{\small Fig.}
	\centering \vspace*{0pt} \setlength{\baselineskip}{10pt}
	\hspace{-13pt}
	\includegraphics[width = 0.47\textwidth,center,trim=0 0 0 0,clip]{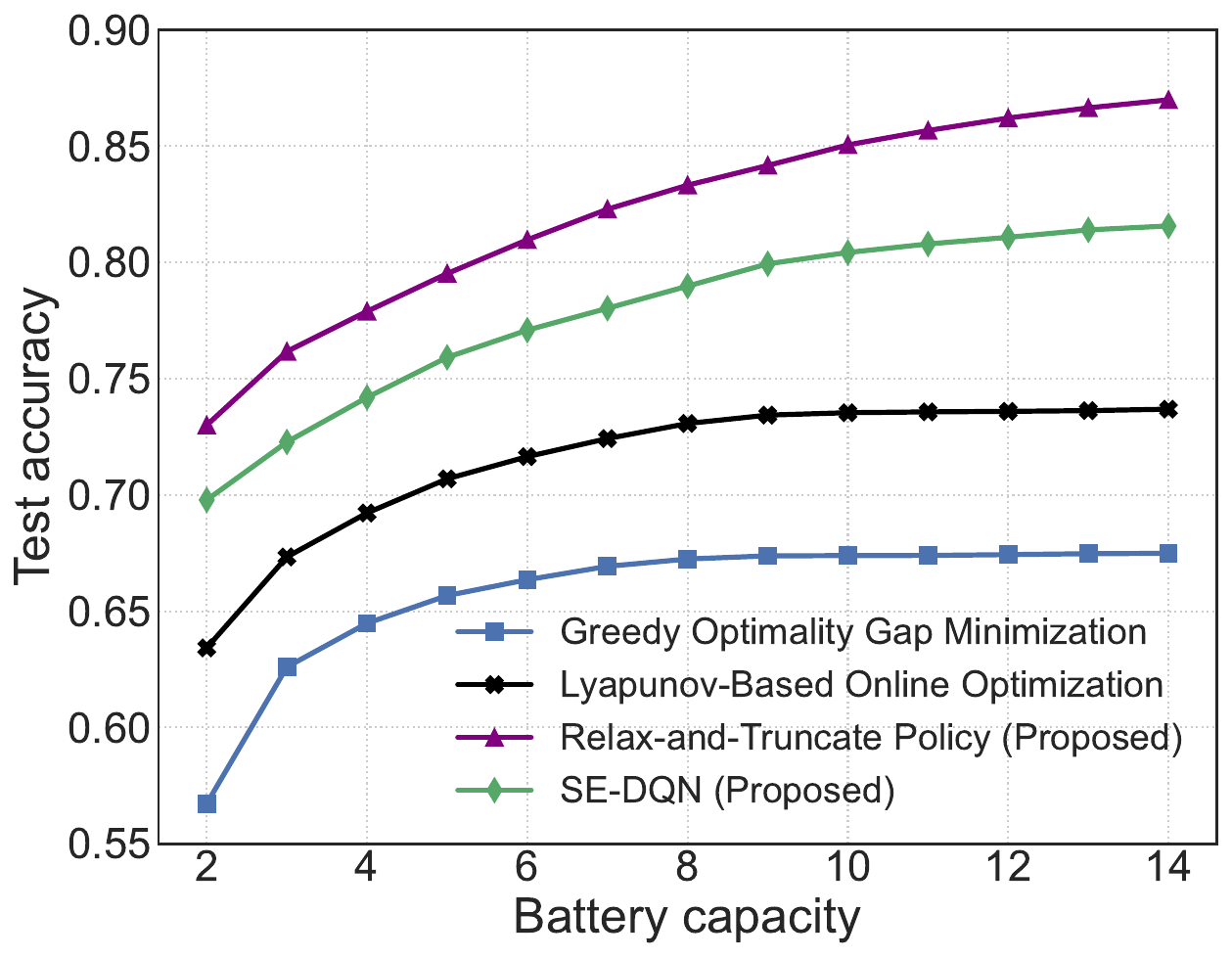}
	\vspace{-7pt}
	\caption{Test accuracy versus battery capacity under different transmission strategies on the MNIST dataset.}\label{fig_battery}
	\vspace{-8pt}
\end{figure}

In Fig. \ref{fig_battery}, we compare the proposed algorithms with other benchmarks under different battery capacities on the MNIST dataset.
As can be observed, our proposed algorithms exhibit considerable performance improvements under different battery capacities.
For all schemes, the test accuracy increases with battery capacity, since smaller capacities lead to quicker saturation and potential waste of harvested energy.
However, once the battery capacity exceeds 9, the accuracy of other benchmarks plateaus, whereas the proposed algorithms continue to exhibit improvement. This divergence can be attributed to the benchmarks' neglect of battery capacity limitations in their algorithm design.
In contrast, the proposed algorithms leverage increased battery capacity to improve state representation and decision-making flexibility, thereby enhancing the learning performance.

\section{Conclusion}

In this paper, we have studied FL with EH devices.
We conducted a convergence analysis to demonstrate the influence of transmission policy on the learning performance.
Based on the convergence bound, we have formulated a joint device scheduling and power control problem to minimize the optimality gap under energy and bandwidth constraints.
We have further modeled this optimization problem as an MDP and demonstrated that its optimal policy possesses a monotone structure.
We have also proposed a low-complexity relax-and-truncate algorithm that is asymptotically optimal as the number of devices increases.
Moreover, for unknown channels and harvested energy statistics, we have developed an SE-DQN algorithm that leverages the monotone structure of the optimal policy to improve the training performance. Finally, extensive numerical experiments have been presented to validate the theoretical results and the effectiveness of the proposed algorithms.

\section*{Appendix}

\subsection{Proof of Theorem \ref{thm_convergence_bound}}\label{proof_convergence}


According to Assumption \ref{smoothness_assump}, we have
\begin{equation}\label{new_1}
	\begin{aligned}
		&F_n(\bar{\v{w}}_t^{(k+1)}) \\
		\aleq & F_n({\v{w}}_{n,t}^{(k)})+
		\bigl\langle 
		\nabla F_n({\v{w}}_{n,t}^{(k)}),\bar{\v{w}}_t^{(k+1)} - {\v{w}}_{n,t}^{(k)}
		\bigr\rangle\\
		&+\frac{L}{2}\bigl\|\bar{\v{w}}_t^{(k+1)} - {\v{w}}_{n,t}^{(k)}\bigr\|^2 \\
		\bleq & F_n({\v{w}}^{*})+
		\bigl\langle 
		\nabla F_n({\v{w}}_{n,t}^{(k)}),\bar{\v{w}}_t^{(k+1)} - {\v{w}}^{*}
		\bigr\rangle\\
		&+\frac{L}{2}\bigl\|\bar{\v{w}}_t^{(k+1)} - {\v{w}}_{n,t}^{(k)}\bigr\|^2 \\
		\leq & F_n({\v{w}}^{*})+
		\bigl\langle 
		\nabla F_n({\v{w}}_{n,t}^{(k)}),\bar{\v{w}}_t^{(k+1)} - {\v{w}}^{*}
		\bigr\rangle\\
		&+L \bigl\|\bar{\v{w}}_t^{(k+1)} - \bar{\v{w}}_t^{(k)}\bigr\|^2 + L \bigl\|\bar{\v{w}}_t^{(k)} - {\v{w}}_{n,t}^{(k)}\bigr\|^2,
	\end{aligned}    
\end{equation}
where (a) and (b) follow from the $L$-smoothness and convexity of the local loss functions, respectively.
Then, according to the local model update rule \eqref{eq:local_update}, we have
\begin{equation}\label{new_2}
	\begin{aligned}
		&\frac{1}{N} \sum_{n=1}^N \left\langle \v{g}_{n,t}^{(k)}, \bar{\v{w}}_t^{(k+1)} - {\v{w}}^{*} \right\rangle\\
		= & - \frac{1}{\eta} \left\langle \bar{\v{w}}_t^{(k+1)} - \bar{\v{w}}_t^{(k)}, \bar{\v{w}}_t^{(k+1)} - {\v{w}}^{*} \right\rangle \\
		 = &\frac{1}{2\eta}
		\bigl\| \bar{\v{w}}_t^{(k)} - {\v{w}}^{*} \bigr\|^2
		-\frac{1}{2\eta}
		\bigl\| \bar{\v{w}}_t^{(k+1)} - \bar{\v{w}}_t^{(k)} \bigr\|^2
		\\
		&-\frac{1}{2\eta}
		\bigl\| \bar{\v{w}}_t^{(k+1)} - {\v{w}}^{*} \bigr\|^2.
	\end{aligned}    
\end{equation}
Combining \eqref{new_2} into \eqref{new_1}, it follows that
\begin{equation}\label{new_3}
	\begin{aligned}
		&F(\bar{\v{w}}_t^{(k+1)}) -  F({\v{w}}^{*})\\
		= & \frac{1}{N} \sum_{n=1}^{N} \left( F_n(\bar{\v{w}}_t^{(k+1)}) -  F_n({\v{w}}^{*}) \right) \\
		\leq & \frac{1}{N} \sum_{n=1}^{N} 
		\bigl\langle 
		\nabla F_n({\v{w}}_{n,t}^{(k)}),\bar{\v{w}}_t^{(k+1)} - {\v{w}}^{*}
		\bigr\rangle\\
		&+L \bigl\|\bar{\v{w}}_t^{(k+1)} - \bar{\v{w}}_t^{(k)}\bigr\|^2 + \frac{L}{N} \sum_{n=1}^{N} \bigl\|\bar{\v{w}}_t^{(k)} - {\v{w}}_{n,t}^{(k)}\bigr\|^2\\
		\leq &  \frac{1}{N} \sum_{n=1}^{N}\bigl\langle 
		\nabla F_n({\v{w}}_{n,t}^{(k)}) - \v{g}_{n,t}^{(k)} ,\bar{\v{w}}_t^{(k+1)} - {\v{w}}^{*}\bigr\rangle \\
		&+ L \bigl\|\bar{\v{w}}_t^{(k+1)} - \bar{\v{w}}_t^{(k)}\bigr\|^2  + \frac{L}{N} \sum_{n=1}^{N}\bigl\|\bar{\v{w}}_t^{(k)} - {\v{w}}_{n,t}^{(k)}\bigr\|^2\\
		&+\frac{1}{2\eta}
		\bigl\| \bar{\v{w}}_t^{(k)} - {\v{w}}^{*} \bigr\|^2
		-\frac{1}{2\eta}
		\bigl\| \bar{\v{w}}_t^{(k+1)} - \bar{\v{w}}_t^{(k)} \bigr\|^2
		\\
		&-\frac{1}{2\eta}
		\bigl\| \bar{\v{w}}_t^{(k+1)} - {\v{w}}^{*} \bigr\|^2.
	\end{aligned}    
\end{equation}
Moreover, given the unbiased stochastic gradient assumption in Assumption \ref{ass_bounded_var_assump}, i.e., $\mathbb{E}\left[ \nabla F_n({\v{w}}_{n,t}^{(k)}) - \v{g}_{n,t}^{(k)}  \right] = 0$, we have
	\begin{align}\label{new_4}
		&\frac{1}{N} \sum_{n=1}^{N}\mathbb{E}\left[ \bigl\langle 
		\nabla F_n({\v{w}}_{n,t}^{(k)}) - \v{g}_{n,t}^{(k)} ,\bar{\v{w}}_t^{(k+1)} - {\v{w}}^{*}\bigr\rangle\right]  \nonumber \\
		= & \frac{1}{N} \sum_{n=1}^{N}\mathbb{E}\left[ \bigl\langle 
		\nabla F_n({\v{w}}_{n,t}^{(k)}) - \v{g}_{n,t}^{(k)} ,\bar{\v{w}}_t^{(k+1)} - \bar{\v{w}}_t^{(k)}\bigr\rangle\right]  \nonumber\\
		\leq & \eta  \mathbb{E}\left[  \left\|  \frac{1}{N} \sum_{n=1}^{N}  \left( \nabla F_n({\v{w}}_{n,t}^{(k)}) - \v{g}_{n,t}^{(k)} \right)  \right\|^2  \right]  \nonumber \\
		&+ \frac{1}{4\eta}  \mathbb{E}\left[  \left\| \bar{\v{w}}_t^{(k+1)} - \bar{\v{w}}_t^{(k)}   \right\|^2  \right]  \nonumber \\
		\leq & \frac{\eta \sigma_l^2}{N} + \frac{1}{4\eta}  \mathbb{E}\left[  \left\| \bar{\v{w}}_t^{(k+1)} - \bar{\v{w}}_t^{(k)}   \right\|^2  \right] .
	\end{align} 
Plugging \eqref{new_4} back to \eqref{new_3} yields
	\begin{align}\label{new_5}
		&\mathbb{E}\left[ F(\bar{\v{w}}_t^{(k+1)}) -  F({\v{w}}^{*}) \right]  \nonumber\\
		\leq &  \frac{\eta \sigma_l^2}{N} + \frac{1}{4\eta}  \mathbb{E}\left[  \left\| \bar{\v{w}}_t^{(k+1)} - \bar{\v{w}}_t^{(k)}   \right\|^2  \right] \nonumber \\
		&+ L \mathbb{E}\left[ \bigl\|\bar{\v{w}}_t^{(k+1)} - \bar{\v{w}}_t^{(k)}\bigr\|^2  \right] + \frac{L}{N} \sum_{n=1}^{N} \mathbb{E}\left[ \bigl\|\bar{\v{w}}_t^{(k)} - {\v{w}}_{n,t}^{(k)}\bigr\|^2 \right]  \nonumber\\
		&+\frac{1}{2\eta} \mathbb{E}\left[ 
		\bigl\| \bar{\v{w}}_t^{(k)} - {\v{w}}^{*} \bigr\|^2\right] 
		-\frac{1}{2\eta} \mathbb{E}\left[ 
		\bigl\| \bar{\v{w}}_t^{(k+1)} - \bar{\v{w}}_t^{(k)} \bigr\|^2 \right] 
		 \nonumber\\
		&-\frac{1}{2\eta} \mathbb{E}\left[ 
		\bigl\| \bar{\v{w}}_t^{(k+1)} - {\v{w}}^{*} \bigr\|^2  \right]  \nonumber\\
		\leq &  \frac{\eta \sigma_l^2}{N} + \left(L - \frac{1}{4\eta} \right)  \mathbb{E}\left[  \left\| \bar{\v{w}}_t^{(k+1)} - \bar{\v{w}}_t^{(k)}   \right\|^2  \right]  \nonumber\\
		&+ \frac{L}{N} \sum_{n=1}^{N} \mathbb{E}\left[ \bigl\|\bar{\v{w}}_t^{(k)} - {\v{w}}_{n,t}^{(k)}\bigr\|^2 \right]  \nonumber\\
		&+\frac{1}{2\eta} \mathbb{E}\left[ 
		\bigl\| \bar{\v{w}}_t^{(k)} - {\v{w}}^{*} \bigr\|^2\right] 
		-\frac{1}{2\eta} \mathbb{E}\left[ 
		\bigl\| \bar{\v{w}}_t^{(k+1)} - {\v{w}}^{*} \bigr\|^2  \right]  \nonumber\\
		\aleq &  \frac{\eta \sigma_l^2}{N} + \frac{L}{N} \sum_{n=1}^{N} \mathbb{E}\left[ \bigl\|\bar{\v{w}}_t^{(k)} - {\v{w}}_{n,t}^{(k)}\bigr\|^2 \right]  \nonumber\\
		&+\!\frac{1}{2\eta} \mathbb{E}\left[ 
		\bigl\| \bar{\v{w}}_t^{(k)} - {\v{w}}^{*} \bigr\|^2\right] 
		\!-\!\frac{1}{2\eta} \mathbb{E}\left[ 
		\bigl\| \bar{\v{w}}_t^{(k+1)} - {\v{w}}^{*} \bigr\|^2  \right] ,
	\end{align}
where (a) holds due to $\eta \leq \frac{1}{4L}$.
Summing the inequality \eqref{new_5} from $k=1$ to $K$, it follows that
	\begin{align}\label{new_6}
		&\frac{1}{K} \sum_{k=1}^{K} \mathbb{E}\left[ F(\bar{\v{w}}_t^{(k+1)}) -  F({\v{w}}^{*}) \right] \nonumber\\
		\leq &  \frac{\eta \sigma_l^2}{N} + \frac{L}{NK} \sum_{n=1}^{N} \sum_{k=1}^{K} \mathbb{E}\left[ \bigl\|\bar{\v{w}}_t^{(k)} - {\v{w}}_{n,t}^{(k)}\bigr\|^2 \right] \nonumber\\
		&+\frac{1}{2\eta K} \sum_{k=1}^{K}\left(  \mathbb{E}\left[ 
		\bigl\| \bar{\v{w}}_t^{(k)} - {\v{w}}^{*} \bigr\|^2\right] 
		-\mathbb{E}\left[ 
		\bigl\| \bar{\v{w}}_t^{(k+1)} - {\v{w}}^{*} \bigr\|^2  \right] \right)\nonumber\\
		\leq &  \frac{\eta \sigma_l^2}{N} + \frac{L}{NK} \sum_{n=1}^{N} \sum_{k=1}^{K} \mathbb{E}\left[ \bigl\|\bar{\v{w}}_t^{(k)} - {\v{w}}_{n,t}^{(k)}\bigr\|^2 \right] \nonumber\\
		&+\frac{1}{2\eta K} \left(  \mathbb{E}\left[ 
		\bigl\| \bar{\v{w}}_t^{(1)} - {\v{w}}^{*} \bigr\|^2\right] 
		-\mathbb{E}\left[ 
		\bigl\| \bar{\v{w}}_t^{(K+1)} - {\v{w}}^{*} \bigr\|^2  \right] \right) .
	\end{align}
Then, according to Lemma 2 in \cite{wang2021field}, by taking $\eta \leq \frac{1}{4L}$, we have
\begin{equation}\label{new_7}
	\begin{aligned}
		&\mathbb{E}\left[ \bigl\|\bar{\v{w}}_t^{(k)} - {\v{w}}_{n,t}^{(k)}\bigr\|^2 \right] 
		\leq  18K^2\eta^2 \sigma_g^2 + 4 K \eta^2 \sigma_l^2.
	\end{aligned}    
\end{equation}
By combining \eqref{new_7} with \eqref{new_6} and telescoping over $t=1,\dots,T$, we have
	\begin{align}
		&\mathbb{E}\Biggl[\frac{1}{K T}\sum_{t=1}^{T}\sum_{k=1}^{K}
		\bigl(F(\bar{\v{w}}_t^{(k)}) - F(\v{w}^\star)\bigr)\Biggr] \nonumber\\
		\leq & \frac{\eta \sigma_l^2}{N}  + 4K\eta^2L\sigma_l^2 + 18K^2\eta^2L\sigma_g^2 \nonumber\\
		& +\frac{1}{2\eta K T} \sum_{t=1}^{T} \left(  \mathbb{E}\!\left[ 
		\bigl\| \bar{\v{w}}_t^{(1)} \!-\! {\v{w}}^{*} \bigr\|^2\right] 
		\!-\!\mathbb{E}\!\left[ 
		\bigl\| \bar{\v{w}}_t^{(K+1)} \!-\! {\v{w}}^{*} \bigr\|^2  \right] \right)   \nonumber\\
		=  & \frac{\eta \sigma_l^2}{N}  + 4K\eta^2L\sigma_l^2 + 18K^2\eta^2L\sigma_g^2 \nonumber\\
		& +\frac{1}{2\eta K T} \!\sum_{t=1}^{T}\! \left(  \mathbb{E}\!\left[ 
		\bigl\| \bar{\v{w}}_t^{(1)} \!-\! {\v{w}}^{*} \bigr\|^2\right] 
		\!-\!\mathbb{E}\!\left[ 
		\bigl\| {\v{w}}_{t+1} \!-\! {\v{w}}^{*} \bigr\|^2  \right] \right)   \nonumber\\
		& +\frac{1}{2\eta K T}\! \sum_{t=1}^{T} \!\left(  \mathbb{E}\!\left[ 
		\bigl\| {\v{w}}_{t+1} \!-\! {\v{w}}^{*} \bigr\|^2\right] 
		\!-\!\mathbb{E}\!\left[ 
		\bigl\| \bar{\v{w}}_t^{(K+1)} \!-\! {\v{w}}^{*} \bigr\|^2  \right] \right)   \nonumber\\
		\aleq &\frac{\eta\sigma_l^2}{N}
		+ 4K\eta^2L\sigma_l^2
		+ 18K^2\eta^2L\sigma_g^2  \nonumber\\
		&+ \frac{\|  \v{w}_1 - \v{w}^* \|^2}{2\eta K T} - \frac{1}{2\eta K T}\mathbb{E}\left[\|  \v{w}_{T+1} - \v{w}^* \|^2 \right]  \nonumber\\
		& +\frac{ G^2}{2\eta N K T}\sum_{t=1}^{T}\sum_{n=1}^{N}\left( 1 - \beta_{n,t} ( 1 -q_{n,t}(h_{n,t},p_{n,t})) \right)  \nonumber\\
		\leq & \frac{\|  \v{w}_1 - \v{w}^* \|^2}{2\eta K T}  + \frac{\eta\sigma_l^2}{N}
		+ 4K\eta^2L\sigma_l^2
		+ 18K^2\eta^2L\sigma_g^2  \nonumber\\
		& +\frac{ G^2}{2\eta N K T}\sum_{t=1}^{T}\sum_{n=1}^{N}\left( 1 - \beta_{n,t} ( 1 -q_{n,t}(h_{n,t},p_{n,t})) \right),\nonumber\\
	\end{align}
where (a) follows directly from the global model update rule defined in \eqref{update_global} and the corresponding broadcasting mechanism.
Then, by taking $\eta = \frac{1}{4 L \sqrt{K T}}$, we have
\begin{equation}
	\begin{aligned}
		&\mathbb{E}\Biggl[\frac{1}{K T}\sum_{t=1}^{T}\sum_{k=1}^{K}
		\bigl(F(\bar{\v{w}}_t^{(k)}) - F(\v{w}^\star)\bigr)\Biggr]\\
		\leq &
		\frac{2 L \|  \v{w}_1 - \v{w}^* \|^2}{\sqrt{K T }}
		+ \frac{\sigma_l^2}{4L N \sqrt{K T}}
		+ \frac{\sigma_l^2 }{4L T}
		+ \frac{9 \sigma_g^2 K }{8L T} \\
		& +\frac{ 2 L G^2}{ N \sqrt{KT} }\sum_{t=1}^{T}\sum_{n=1}^{N}\left( 1 - \beta_{n,t} ( 1 -q_{n,t}(h_{n,t},p_{n,t})) \right).
	\end{aligned}    
\end{equation}

\subsection{Proof of Theorem \ref{bellman}}
Let $J^{*}_{t}(\v{h}_t,\v{b}_t)$ denote the optimal objective value of the $(T-t)$-stage problem that starts from state $(\v{h}_t, \v{b}_t)$ at time slot $t$ and ends at time slot $T$, i.e.,
\begin{equation}
\begin{aligned}\label{def_J_t}
    J^{*}_{t}(\v{h}_t,\v{b}_t)=\min_{\pi}\mathbb{E}_{\pi}\left[\sum_{i=t}^{T}c(\v{h}_{i},\v{p}_{i}) | \v{h}_{t},\v{b}_{t}\right],\forall t.
\end{aligned} 
\end{equation}
Then, we prove by induction that $J^{*}_{t}(\v{h}_t,\v{b}_t)$ is equivalent to the value function $V_{t}(\v{h}_t,\v{b}_t)$.
It is obvious that $J^{*}_{T}(\v{h}_T,\v{b}_T)=V_{T}(\v{h}_T,\v{b}_T)$ according to the definition of $V_{T}(\v{h}_T,\v{b}_T)$ as shown in \eqref{bellman_equ_terminal}. Now, suppose that for some $t$, we have $J^{*}_{t+1}(\v{h}_{t+1},\v{b}_{t+1})=V_{t+1}(\v{h}_{t+1},\v{b}_{t+1})$. Then, we have
\begin{align}\label{J_induction}
	\nonumber
    &J^{*}_{t}(\v{h}_t,\v{b}_t)\\
    =&\min_{\pi}\mathbb{E}_{\pi}\left[\sum_{i=t}^{T}c(\v{h}_{i},\v{p}_{i}) | \v{h}_{t},\v{b}_{t}\right] \nonumber\\ 
    \nonumber
    =&\min_{\v{p}_{t}\in \ca{P}_{t}}\Bigg\{c(\v{h}_{t},\v{p}_{t})+\min_{\pi}\mathbb{E}_{\pi}\left[\sum_{i=t+1}^{T}c(\v{h}_{i},\v{p}_{i}) | \v{h}_{t+1},\v{b}_{t+1}\right]\Bigg\}\\
    \nonumber
    =&\min_{\v{p}_{t}\in \ca{P}_{t}}\left\lbrace  c(\v{h}_{t},\v{p}_{t})+J^{*}_{t+1}(\v{h}_{t+1},\v{b}_{t+1})   \right\rbrace \\
    \nonumber
    =&\min_{\v{p}_{t}\in \ca{P}_{t}} \left\lbrace  c(\v{h}_{t},\v{p}_{t})+V_{t+1}(\v{h}_{t+1},\v{b}_{t+1}) \right\rbrace  \\
    =&V_{t}(\v{h}_{t},\v{b}_{t}).
\end{align}  
Then, by induction, we have
\begin{equation}
\begin{aligned}\label{V_0}
    V_{1}(\v{h}_{1},\v{b}_{1})=&J^{*}_{1}(\v{h}_{1},\v{b}_{1})\\
    =&\min_{\pi}\mathbb{E}_{\pi}\left[\sum_{t=1}^{T} c(\v{h}_{t},\v{p}_{t}) | \v{h}_{1},\v{b}_{1}\right].
\end{aligned}    
\end{equation}
Now, we have proved that $V_{1}(\v{h}_{1},\v{b}_{1})$ is equivalent to the optimal objective value of Problem \ref{Problem1}.

\subsection{Proof of Lemma \ref{V_convex}}\label{proof_V_convex}

We first prove that the final-time value function $V_{T}(\v{h}_T,\v{b}_T)$ is convex in its battery state $\v{b}_T$. According to \cite[Theorem 9]{loyka2013convexity}, the packet error rate $q_{n,t}(h_{n,t},p_{n,t})$ is decreasing and convex in $p_{n,t}$. Then, we have that the cost $c(\v{h}_{t},\v{p}_{t})$ is decreasing and convex in $\v{p}_{t}$. Additionally, at the final time slot $T$, all the remaining energy in the battery should be used up for transmission, i.e., $b_{n,T}= e_{n}^{\text{cmp}}+\frac{p_{n,T} s_n}{r_{n,T}}$. Thus, we can obtain that the final-time value function $V_{T}(\v{h}_T,\v{b}_T)=\min_{\v{p}_T}\left\{c(\v{h}_T,\v{p}_T)\right\}$ is convex in $\v{b}_T$.
Now assume that $V_{t+1}(\v{h}_{t+1},\v{b}_{t+1}), 1 \leq t \leq T-1$, is convex in $\v{b}_{t+1}$ for given $\v{h}_{t+1}$. Then, we have that function
\begin{equation}\label{value_zhong}
	\begin{aligned}
		V_{t+1}(\v{h}_{t+1},\v{b}_{t+1}) = V_{t+1}(\v{h}_{t+1},\min\{\v{b}_{t} - \v{e}_t + \v{u}_t , \v{b}^{\textup{max}}\})
	\end{aligned}    
\end{equation}
is convex in $\v{b}_t$, where $\v{e}_t = \left\lbrace e_{1,t},\ldots, e_{N,t}\right\rbrace $ and $\v{u}_t = \left\lbrace u_{1,t},\ldots, u_{N,t}\right\rbrace $.
Since the expectation operator preserves convexity, the value function $V_{t}(\v{h}_{t},\v{b}_{t})$ in \eqref{bellman_equ} is the infimal convolution of two convex functions in $\v{b}_{t}$. Thus, the value function $V_{t}(\v{h}_{t},\v{b}_{t})$ is convex in $\v{b}_{t}$.

\subsection{Proof of Theorem \ref{p_decreasing_h}}

For clarity, given an action $\v{p}_{t}$, we define $\v{p}_{n,t}^+ = \{p_{1,t},\ldots,p_{n,t}+1,\ldots,p_{N,t}\}$. According to \cite[Theorem 1]{krishnamurthy2023interval}, to show that $p_{n,t}^*$ is non-increasing in $h_{n,t}$, it suffices to verify that the following four conditions hold:

(1) $c(\v{h}_{t},\v{p}_{t})$ is non-increasing in $h_{n,t}$.

(2) $\sum_{\left\lbrace\v{h}_{t+1},\v{b}_{t+1}| {h}_{n,t+1}\geq \tilde{h}\right\rbrace }\b{P}(\v{h}_{t+1},\v{b}_{t+1}|\v{h}_{t},\v{b}_{t},\v{p}_t)$ is non-decreasing in $h_{n,t}$, for all $\tilde{h}\in \ca{H}$.

(3) There exists a function $\gamma_{\v{h}_{t}^-,\v{h}_{t}^+,\v{p}_{t}}  > 0$ and non-decreasing in $\v{p}_t$ such that for any $\v{p}_t$ and $\v{h}_{t}^{-} \leq \v{h}_{t}^{+}$, we have $c(\v{h}_{t}^+,\v{p}_{n,t}^{+}) - c(\v{h}_{t}^+,\v{p}_{t}) \leq \gamma_{\v{h}_{t}^-,\v{h}_{t}^+,\v{p}_{t}}   [c(\v{h}_{t}^-,\v{p}_{n,t}^{+}) - c(\v{h}_{t}^-,\v{p}_{t})]$.

(4) $\sum_{\left\lbrace \v{h}_{t+1},\v{b}_{t+1}| {h}_{n,t+1}\geq \tilde{h}\right\rbrace }\b{P}(\v{h}_{t+1},\v{b}_{t+1}|\v{h}_{t},\v{b}_{t},\v{p}_t)$ is submodular in $h_{n,t}$ and $p_{n,t}$, for all $\tilde{h}\in \ca{H}$.

Note that condition (3) is less restrictive than the submodularity condition \eqref{submodular_equ}, and when $\gamma_{\v{h}_{t}^-,\v{h}_{t}^+,\v{p}_{t}}= 1$, condition (3) is equivalent to the submodularity condition.

First, it is clear that condition (1) holds due to the fact that the packet error rate $q_{n,t}(h_{n,t},p_{n,t})$ is decreasing in $h_{n,t}$.
Moreover, let $\gamma_{\v{h}_{t}^-,\v{h}_{t}^+,\v{p}_{t}} = \frac{c(\v{h}_{t}^+,\v{p}_{t}) - c(\v{h}_{t}^+,\v{p}^+_{n,t})}{c(\v{h}_{t}^-,\v{p}_{t}) } > 0$, and then we can obtain that condition (3) holds.

Next, we verify conditions (2) and (4).
We first define $\sum_{\left\lbrace\v{h}_{t+1},\v{b}_{t+1}| {h}_{n,t+1}\geq \tilde{h}\right\rbrace }\b{P}(\v{h}_{t+1},\v{b}_{t+1}|\v{h}_{t},\v{b}_{t},\v{p}_t)$ with fixed $\v{b}_t$ and $\left\lbrace h_{m,t},m\in [N]\backslash n\right\rbrace $ as follows:
	\begin{align}\label{sub_fucntion}
		G_{n,t}(h_{n,t},p_{n,t})=&\!\sum_{\left\lbrace\!\v{h}_{t+1},\v{b}_{t+1}| {h}_{n,t+1}\geq \tilde{h}\!\right\rbrace }\!\b{P}(\v{h}_{t+1},\v{b}_{t+1}|\v{h}_{t},\v{b}_{t},\v{p}_t) \nonumber \\
		\aeq&\!\sum_{\left\lbrace\! {h}_{n,t+1}\geq \tilde{h}\!\right\rbrace }\!\b{P}(h_{n,t+1}|h_{n,t}),
	\end{align} 
where (a) is due to \eqref{trans_prob}. According to Assumption \ref{channel_transmission_matrix_assumption}, we have that $G_{n,t}(h_{n,t},p_{n,t})$ is non-decreasing in $h_{n,t}$. Moreover, $G_{n,t}(h_{n,t},p_{n,t})$ is independent of $p_{n,t}$, and thus it is submodular in $h_{n,t}$ and $p_{n,t}$. Therefore, conditions (2) and (4) hold. Having confirmed conditions (1)-(4), it suffices to show that $p_{n,t}^*$ is non-increasing in $h_{n,t}$.

\subsection{Proof of Theorem \ref{asymptotically_optimal}}\label{proof_asymptotically_optimal}

Recall that $\overline \Omega_t$ denotes the set of scheduled devices under the optimal relaxed policy $\overline\pi^*$.
According to the truncation procedure, if $|\overline \Omega_t|>R$, $R$ devices will be chosen randomly from the set $\overline \Omega_t$ and scheduled.
Therefore, the probability that a device $n$ belongs to the set $\overline \Omega_t$ but is not scheduled is $\frac{(|\overline \Omega_t|-R)^+}{|\overline \Omega_t|}$, where $(\cdot)^+=\max\left\lbrace \cdot,0\right\rbrace $.
At time slot $t$, the per-device cost of device $n$ is no more than $\frac{ 2 L G^2  }{N \sqrt{KT}  }$.
Then, the gap between the cumulative costs under truncation policy $\tilde \pi$ and the optimal relaxed policy $\overline\pi^*$ is upper bounded by
\begin{align}\label{bound_J_thm_proof_1}
	J(\tilde \pi)-J(\overline \pi^*)
	\leq&\b{E}_{\overline \pi^*\!\!}\left[ \sum_{t=1}^{T}\sum_{n=1}^{N} \frac{2 L G^2  (|\overline \Omega_t|-R)^+}{N \sqrt{KT}|\overline \Omega_t|}\right] \nonumber\\
	=&\frac{ 2 L G^2  }{\sqrt{KT}  } \sum_{t=1}^{T}  \b{E}_{\overline \pi^*\!\!}\left[\frac{(|\overline \Omega_t|-R)^+}{|\overline \Omega_t|}\right]  \nonumber\\
	\aleq&\frac{ 2 L G^2  }{R \sqrt{KT}  }\sum_{t=1}^{T} \b{E}_{\overline \pi^*\!\!}\left[(|\overline \Omega_t|-R)^+\right]\nonumber\\
	\bleq&\frac{ 2 L G^2  }{R \sqrt{KT}  }\sum_{t=1}^{T} \b{E}_{\overline \pi^*\!\!}\left[ \left(|\overline \Omega_t|-\b{E}_{\overline \pi^*\!\!}\left[|\overline \Omega_t|\right]\right)^{ +}\right]\nonumber\\
	\leq&\frac{ 2 L G^2  }{R \sqrt{KT}  }\sum_{t=1}^{T}\underbrace{\b{E}_{\overline \pi^*\!}\!\left[ \left| |\overline \Omega_t|-\b{E}_{\overline \pi^*\!\!}\left[|\overline \Omega_t|\right]\right|\right]}_{=\textup{MAD}(|\overline \Omega_t|)},
\end{align}
where (a) follows from $\frac{(|\overline \Omega_t|-R)^+}{|\overline \Omega_t|}\leq \frac{(|\overline \Omega_t|-R)^+}{R}$, and (b) holds as $\b{E}_{\overline \pi^*}\left[|\overline \Omega_t|\right] \leq R$.
The notation $\textup{MAD}({X}):=\b{E}[|X - \b{E}[X]|]$ stands for the mean absolute deviation of its argument.

Let $\overline \beta_{n,t} $ denote the scheduling indicator under the optimal relaxed policy $\overline \pi^*$, and we have $|\overline \Omega_t|=\sum_{n=1}^N\overline \beta_{n,t}$. The scheduling indicators $\left\lbrace \overline \beta_{n,t}, n \in [N]\right\rbrace $ are independent binary random variables, since the optimal relaxed policy is obtained by decoupling the relaxed problem into per-device subproblems. Thus, we have that the standard deviation of $|\overline \Omega_t|$, $\textup{STD}(|\overline \Omega_t|)$, is upper bounded by
	\begin{align}\label{bound_J_thm_proof_add1}
		\textup{STD}(|\overline \Omega_t|)=\bigg(\sum_{n=1}^{N} \underbrace{\b{P}(\overline \beta_{n,t}=1)(1-\b{P}(\overline \beta_{n,t}=1))}_{\leq 1} \bigg)^{\frac{1}{2}}\leq \sqrt{\!N} .
	\end{align}
According to \cite{diaconis1991closed}, we have that $\textup{MAD}(X)\leq \textup{STD}(X)$ for arbitrary random variable $X$. Then, combining \eqref{bound_J_thm_proof_add1} into \eqref{bound_J_thm_proof_1}, we have
\begin{equation}\label{bound_J_thm_proof_add2}
	\begin{aligned}
		J(\tilde \pi)-J(\overline \pi^*)
		\leq  \frac{ 2 L G^2  \sqrt{NT} }{R \sqrt{K}  } =   \frac{ 2 L G^2  \sqrt{T} }{\kappa \sqrt{N K}  } .
	\end{aligned}    
\end{equation}

The expected cumulative cost under $\overline \pi^*$ is a lower bound of the expected cumulative cost under the optimal policy $\pi^*$, since constraint $\overline{\textup{C3}}$ is a relaxed version of constraint C3. Thus, we have
\begin{equation}\label{bound_J_thm_proof_2}
	\begin{aligned}
		J(\tilde \pi)-J(\pi^*)\leq& J(\tilde \pi)-J(\overline \pi^*)\leq \frac{ 2 L G^2  \sqrt{T} }{\kappa \sqrt{N K}  } .
	\end{aligned}    
\end{equation}

\bibliography{IEEEabrv,PA_DMTL}
\bibliographystyle{ieeetr}

\end{document}